\documentclass{article}
\usepackage{graphicx} % Required for inserting images
\usepackage{float} 
\usepackage{url}
\usepackage{amsmath}
\usepackage{setspace}
\usepackage{parskip}
\usepackage{microtype}
\usepackage[margin=3cm]{geometry}
\usepackage{titlesec}
\usepackage{lmodern}
\usepackage{amsmath, amssymb}
\usepackage{fancyhdr}
\usepackage{natbib}
\usepackage{hyperref}
\usepackage{caption}
\usepackage{algorithm}% http://ctan.org/pkg/algorithms
\usepackage{algorithmic}
\usepackage{amsthm}
\usepackage{booktabs}
\usepackage{rotating} 
\usepackage{dsfont}
\usepackage{booktabs}
\usepackage{tabularx}
\usepackage{booktabs}
\usepackage{makecell}
\usepackage{array}
\usepackage{amsthm}
\usepackage{authblk}
\newtheorem{theorem}{Theorem}
\newtheorem{definition}{Definition}
\newtheorem*{theorem*}{Theorem}
\usepackage{xcolor}
\newcommand{\mbf}{\mathbf}
\newcommand{\bsym}{\boldsymbol}

\title{Concept frustration: Aligning human concepts and machine representations}

\author[1,2]{Enrico Parisini}
\author[3]{Christopher J. Soelistyo}
\author[4]{Ahab Isaac}
\author[4,5]{Alessandro Barp}
\author[1,5,*]{Christopher R.S. Banerji}

\affil[1]{Comprehensive Cancer Centre, Guy's Hospital, King’s College London, Great Maze Pond, London, SE1 9RT, UK}
\affil[2]{PharosAI, King’s College London, London, SE1 9RT, UK}
\affil[3]{The Francis Crick Institute, 1 Midland Rd, London, NW1 1AT, UK}
\affil[4]{Department of Statistical Science, University College London, WC1E 7HB, UK}
\affil[5]{The Alan Turing Institute, NW1 2DB, UK}

\affil[*]{Corresponding author: christopher.banerji@kcl.ac.uk}
\date{March 2026}

\begin{document}

\maketitle
%TC:ignore
\begin{abstract}
    \noindent
    Aligning human-interpretable concepts with the internal representations learned by modern machine learning systems remains a central challenge for interpretable AI. We introduce a geometric framework for comparing supervised human concepts with unsupervised intermediate representations extracted from foundation model embeddings. Motivated by the role of conceptual leaps in scientific discovery, we formalise the notion of concept frustration: a contradiction that arises when an unobserved concept induces relationships between known concepts that cannot be made consistent within an existing ontology. We develop task-aligned similarity measures that detect concept frustration between supervised concept-based models and unsupervised representations derived from foundation models, and show that the phenomenon is detectable in task-aligned geometry while conventional Euclidean comparisons fail. Under a linear–Gaussian generative model we derive a closed-form expression for Bayes-optimal concept-based classifier accuracy, decomposing predictive signal into known–known, known–unknown and unknown–unknown contributions and identifying analytically where frustration affects performance. Experiments on synthetic data and real language and vision tasks demonstrate that frustration can be detected in foundation model representations and that incorporating a frustrating concept into an interpretable model reorganises the geometry of learned concept representations, to better align human and machine reasoning. These results suggest a principled framework for diagnosing incomplete concept ontologies and aligning human and machine conceptual reasoning, with implications for the development and validation of safe interpretable AI for high-risk applications.
\end{abstract}
%TC:endignore
\section{Introduction}

Human-level interpretability in artificial intelligence (AI) is critical for high-consequence domains such as medicine, scientific discovery and criminal justice \cite{rudin2019stop,doshi2017towards,hassija2024interpreting}. In such settings, errors can have catastrophic consequences and accountability, as well as emerging regulation, requires not only accurate predictions but transparent reasoning \cite{EUMemberStates2024,Binns2021}. %Moreover, optimal decisions in high-stakes domains frequently involve subjective judgments. In medicine, for example, clinicians and patients may legitimately disagree about treatment strategies \cite{symmons2023decision,chandwani2017lack}. Without transparent reasoning, AI systems risk undermining informed consent and autonomy. 

There are two prominent strategies for securing transparency. (1) Post-hoc feature attribution, where we train a black box model, then perform techniques such as SHAP \cite{lundberg_unified_2017}, LIME \cite{ribeiro_why_2016} or Integrated Gradients \cite{sundararajan_axiomatic_2017, Saranya2023} to identify task-relevant elements of the input. Similarly, mechanistic interpretability decomposes feature representations of trained models using methods such as sparse autoencoders (SAEs) \cite{Cunningham2023,Elhage2022,bau2017network}. (2) Interpretability-by-design, where we optimise aspects of the model architecture, training regime or input set, for interpretability pre-hoc. This includes the concept bottleneck model (CBM), where we constrain the model to learn a set of supervised concepts that are pre-defined by human domain experts and therefore guaranteed to be interpretable \cite{Koh2020}.

Both strategies have limitations. Since post-hoc methods construct local surrogates of a complex black-box model, there is no guarantee they will provide accurate or holistic interpretations of model behaviour \cite{rudin2019stop}, conversely SAE derived features often exhibit polysemanticity and feature superposition, degrading interpretability \cite{Cunningham2023,Elhage2022,bau2017network}. In CBMs, the bottleneck constraint that ensures interpretability also limits model expressivity, undermining performance when task-relevant features are excluded from the supervised concepts \cite{Zarlenga2022,Debot2025}.

There are at least two solutions to this problem in CBMs. The first is to allow predictive information to flow through an auxiliary pathway that bypasses the concept bottleneck \cite{Mahinpei2021PromisesAP, Havasi2022, Sawada2022ConceptBM, yuksekgonul2023posthoc, ismail2024concept}. These ``concept side-channel models'' increase expressivity but at the expense of interpretability. The second is to optimise the concept set - or \emph{ontology} - for the task at hand. However, constructing an adequate ontology first requires a reliable diagnostic for cases when the current ontology is inadequate. Recent work has addressed this by comparing supervised CBM concepts with unsupervised SAE features through geometric alignment in Euclidean space \cite{Rocchi--Henry2025}. While foundational, such approaches rely on restrictive assumptions: that supervised concepts are statistically independent, that alignment should be assessed in a task-agnostic geometry and that particular parameter constraints (such as non-negativity of concept codes) are preserved. These assumptions are often violated in practice.

%In this study, we provide a novel diagnostic framework for incomplete ontologies that addresses the limitations of prior work. Concretely, we identify a phenomenon that we term \emph{concept frustration}, where there exists task-relevant variation in the input space that cannot be coherently captured in concept space. We provide a formal definition for frustration, and demonstrate that detecting frustration requires working in a task-aligned geometry, rather than a Euclidean geometry. Finally, we empirically show in real-world language and vision tasks that frustration degrades performance and interpretability in CBMs. These results suggest that concept frustration is vital to understanding whether ontologies are incomplete, and how this could be rectified.

Here we introduce a diagnostic framework for incomplete concept ontologies that addresses limitations of prior work. We identify a phenomenon we term \emph{concept frustration}, in which information that is predictive of the task but absent from the supervised concept set induces contradictions between known concepts, revealing that the ontology is incomplete. We show that detecting frustration requires a task-aligned geometry, rather than a Euclidean geometry, and that frustration degrades both performance and interpretability in CBMs. In real-world language and vision tasks, resolving frustration by incorporating a previously unobserved frustrating concept reorganises existing concept geometry, restoring consistency between supervised and unsupervised concept sets. 

Together, these results provide a pathway for identifying hidden conceptual dimensions, that can fundamentally reorganise how supervised models, and even human overseers, reason about the world.

\section{Results}

\subsection{Flatworld: a motivating example}

To illustrate frustration, consider a thought experiment: a flatworlder navigates the Earth using two trusted concepts: distance to the North Pole and distance to the South Pole. They expect these concepts to oppose each other, so that moving closer to one Pole moves them farther from the other. However, due to the Earth's curvature, a traveller who moves closer to the Earth's centre, say by diving in an ocean, can move closer to both Poles simultaneously (Figure \ref{fig:frustration}). 

This introduces an unknown concept, depth, that is not part of the flatworlder’s ontology but distorts how known concepts relate. We refer to such a hidden, task-relevant concept as \emph{frustrating}: without it, the flatworlder cannot form a coherent picture of their world.

%\subsection{Flat-world: a motivating example}

% Consider a thought experiment: a flat-worlder is navigating the surface of the Earth using two trusted concepts: distance to the North Pole and distance to the South Pole. They expect that these concepts oppose each other, so that moving closer to one Pole moves them farther from the other. However, due to the Earth's curvature, a traveller who gets closer to the centre of the Earth, say by diving in an ocean, can move in a way that gets them closer to both Poles simultaneously (Figure \ref{fig:frustration}). This concept of `depth' is unknown to the flat-worlder and can only be observed through its distortion of how known concepts interrelate. It is therefore a \emph{frustrating} concept, without which the flat-worlder cannot fully understand their world.

%TC:ignore
\begin{figure}[H]
    \centering
    \includegraphics[width=\linewidth]{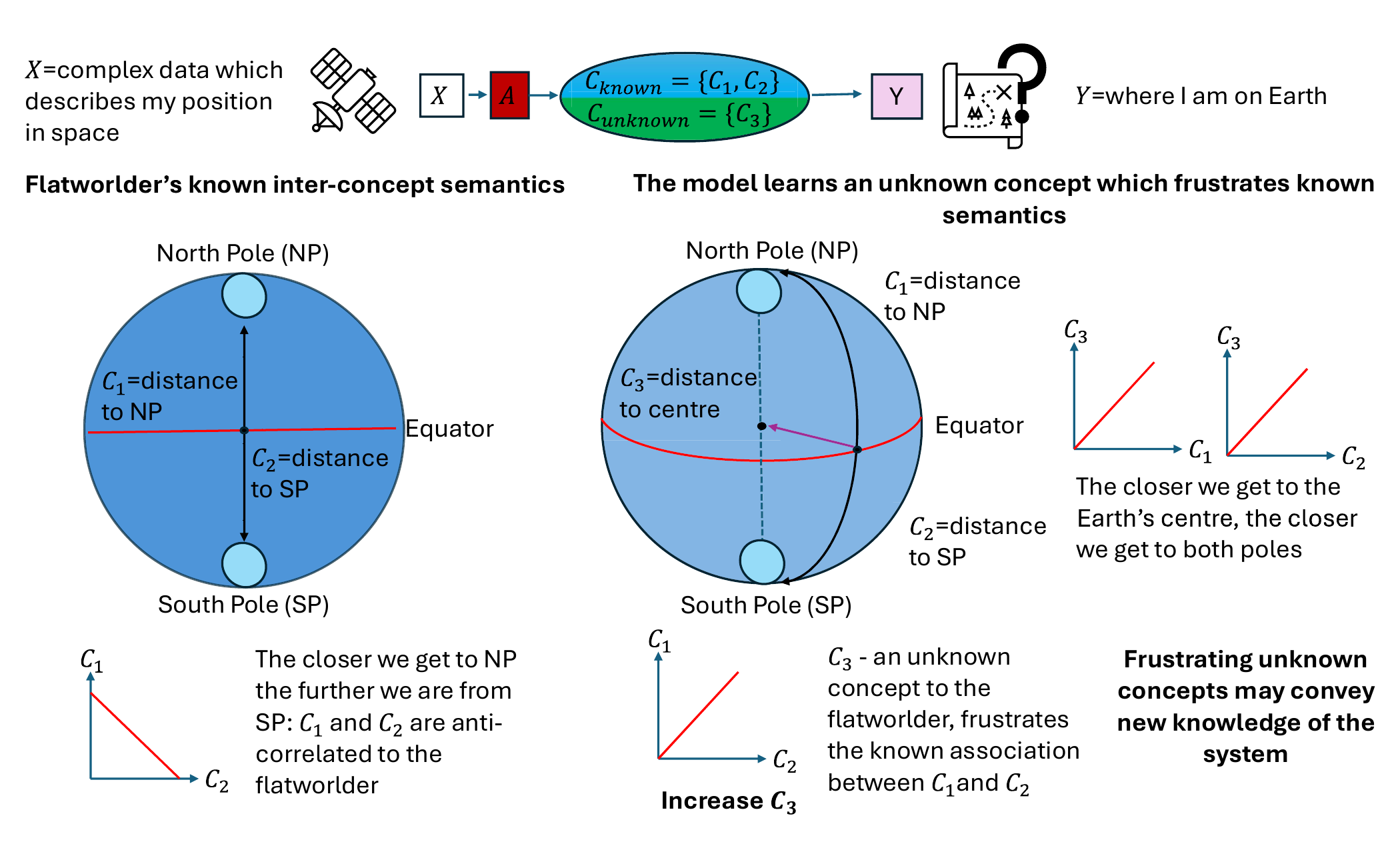}
    \vspace{-0.5em}
    \caption{\textbf{The importance of concept frustration.} Consider a flatworlder training a model to predict a position on Earth from a complex signal (such as from a satellite), in an interpretable manner. They suggest two known concepts as $C_1$=distance form the North Pole and $C_2$=distance from the South pole. They expect these to be anti-correlated. Their trained model is highly accurate of position, but reasons using an unknown concept $C_3$, which frustrates their known inter-concept semantics. Increasing $C_3$ forces $C_1$ and $C_2$ to be positively correlated. Once $C_3$ is understood as distance to the core of a curved Earth, the frustration makes sense and known understanding of the system is improved.}
    \label{fig:frustration}
    \vspace{-0.5em}
\end{figure}
%TC:endignore
\subsection{Background and models}

Let $X$ denote a space of input data. An arbitrary feature extractor (e.g., a foundation model) maps inputs to an activation space $f: X \to A \subset \mathbb{R}^r$. For $\mbf{x} \in X$, we write $\mbf{a} = f(\mbf{x}) \in A$ for its activation vector (Figure \ref{fig:overview_models}A). We treat $A$ as a dense representation of features relevant for predicting downstream tasks.

Let $Y$ denote an outcome variable and $C = \bigcup_{j=1}^k C_j$ a set of $k$ concepts. We assume that $Y$ and $C$ are associated, such that $p(Y|C) \not=p(Y)$. Similarly we assume that $A$ and $C$ are associated, i.e., $p(C|A) \not=p(C)$, which naturally implies an association between $A$ and $Y$ such that $p(Y|A)\not =p(Y)$.

We partition $C$ into $k_{\text{known}}$ known concepts,
\[
C_{\text{known}} = \bigcup_{j=1}^{k_{\text{known}}} C_j,
\]
and $k - k_{\text{known}}$ unknown concepts,
\[
C_{\text{unknown}} = \bigcup_{j=k_{\text{known}}+1}^{k} C_j.
\]
%so that $C = C_{\text{known}} \cup C_{\text{unknown}}$. 
For each $\mbf{a} \in A$, we assume $C_{\text{known}}$ is observed, while $C_{\text{unknown}}$ is not.

A common goal of interpretable modelling is to predict $Y$ from $A$ via intermediate concepts $C$, enabling human-understandable reasoning (Figure \ref{fig:overview_models}B). We consider three modelling strategies. Further architectural and training details are provided in the Methods.

\begin{enumerate}

\item \textbf{Black-box model.}  
We directly approximate $p(Y \mid A)$ using a model trained on $(\mbf{a}_i, y_i)_{i=1}^{N_{\text{train}}}\subset (A,Y)$. In our implementation, we use a single hidden layer of width $h$ with ReLU activation and consider $Y$ as a binary task (extension to non-binary is straightforward). The hidden layer outputs logits $l(\mbf{a})$, from which the predicted task label is obtained via a sigmoid transformation, $\hat{y} = \sigma(l(\mbf{a}))$.
This approach can achieve high predictive accuracy, but offers no direct interpretability (Figure \ref{fig:overview_models}C). See \ref{sec: model architecture} for further details.

\item \textbf{Sparse autoencoder (SAE).}  
We extract $k_{\text{SAE}}$ unsupervised latent features from $A$, using a model trained on $(\mbf{a}_i)_{i=1}^{N_{\text{train}}}\subset A$. An encoder $s: \mathbb{R}^{r} \rightarrow \mathbb{R}^{k_{\text{SAE}}}$ maps activations to the latent feature space
\[
s(\mbf{a}) = \mathrm{ReLU}(\mbf{W}_{\text{SAE}} \mbf{a} + \mbf{b}_{\text{SAE}}) \in \mathbb{R}^{k_{\text{SAE}}},
\]
where $\mbf{W}_{\text{SAE}} \in \mathbb{R}^{k_{\text{SAE}}\times r}$ is a learned matrix of weights and $\mbf{b}_{\text{SAE}} \in \mathbb{R}^{k_{\text{SAE}}}$ is a learned vector of biases, and a linear decoder reconstructs
\[
\hat{\mbf{a}} = s(\mbf{a}) \mbf{D},
\]
where $\mbf{D} \in \mathbb{R}^{k_{\text{SAE}} \times r}$ is a dictionary whose rows define atoms in activation space.

The SAE decomposes $A$ into unsupervised features and may loosely estimate $p(C|A)$, however these features are not conditioned on relevance to the task $Y$ (Figure \ref{fig:overview_models}D).

\item \textbf{Concept bottleneck model (CBM).} Known concepts are first predicted from activations, and these estimated concepts are subsequently used to predict the outcome. We consider here joint training from $(\mbf{a}_i,\mbf{c}_i^{known}, y_i)_{i=1}^{N_{\text{train}}}\subset (A,C_{known},Y)$, and assume $A$ is a sufficiently processed representation that concepts are linearly deducible. The CBM thus learns a linear concept mapping
\[
\hat{\mbf{c}}_{\text{known}}(\mbf{a}) = \mbf{Q} \mbf{a},
\]
with $\mbf{Q} \in \mathbb{R}^{k_{\text{known}} \times r}$, together with a linear task head that maps the predicted concepts to a binary outcome via a sigmoid transformation, yielding $\hat{y}(\mbf{a})$.

The CBM estimates $p(C_{\text{known}} \mid A)$ and $p(Y \mid \hat{C}_{\text{known}})$. When the task depends strongly on $C_{\text{unknown}}$, predictive accuracy may degrade relative to the black-box model. However, the CBM provides direct interpretability by expressing predictions as linear functions of supervised concepts (Figure \ref{fig:overview_models}E).

\end{enumerate}
%TC:ignore
\begin{figure}[H]
    \centering
    \includegraphics[width=\linewidth]{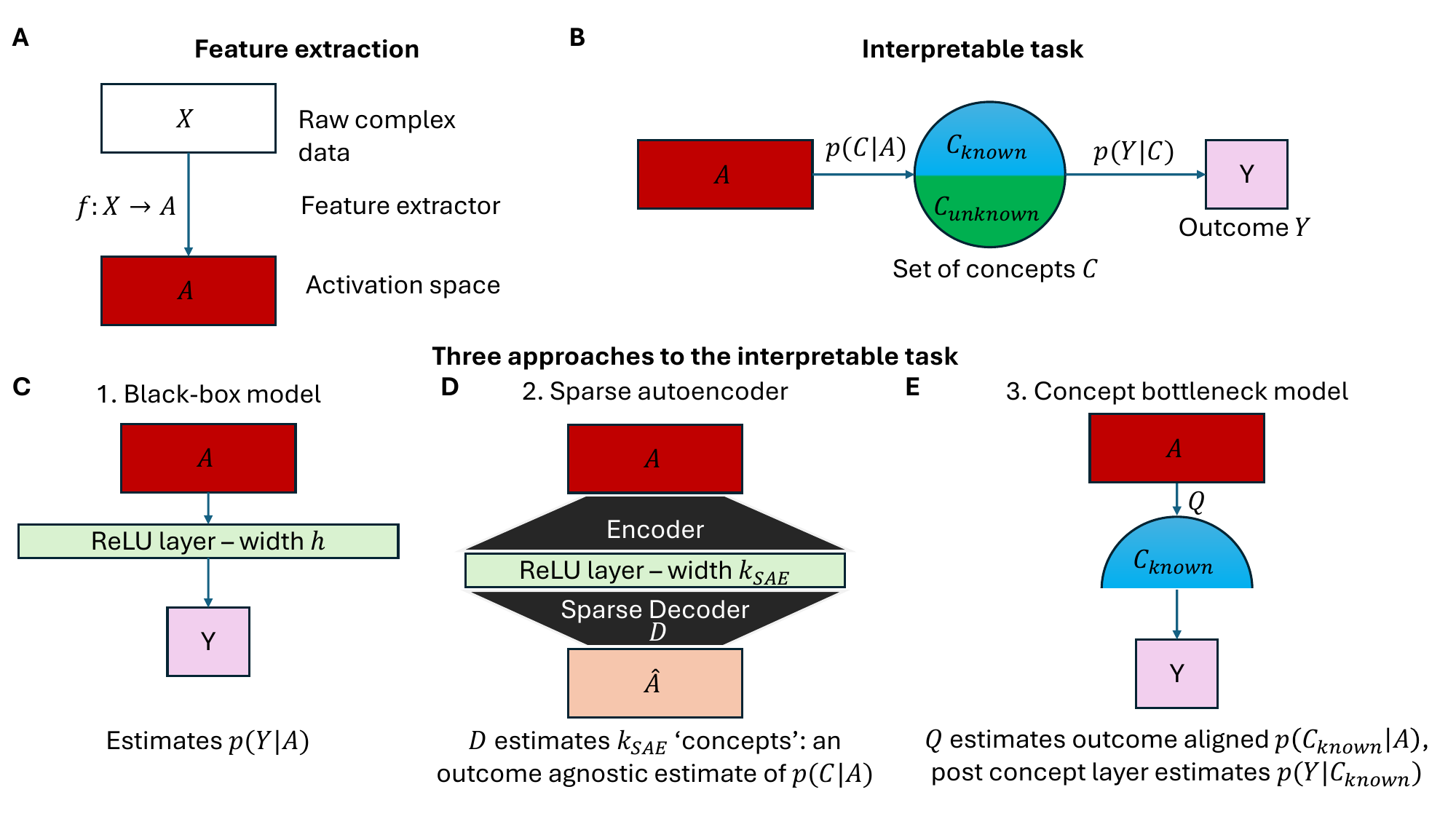}
    \vspace{-0.5em}
    \caption{\textbf{Overview of approach.} (A) Raw data is typically processed by a feature extractor (such as a foundation model) to produce an activation vector. (B) Activations can be used to predict an outcome reasoned on a set of intermediate concepts some which may be known prior to predicting the task. This task can be approached using (C) a black box model, (D) a sparse autoencoder (E) a concept bottleneck model. }
    \label{fig:overview_models}
    \vspace{-0.5em}
\end{figure}
%TC:endignore
\subsection{A task-aligned geometry to compare supervised and unsupervised concepts}

The three models provide three outputs:
\begin{enumerate}
    \item Black box model: An estimate of $p(Y|A)$.
    \item SAE: A matrix $\mbf{D} \in\mathbb{R}^{k_{\text{SAE}}\times r}$, describing a dictionary of $k_{\text{SAE}}$ \emph{unsupervised} concepts extracted from $A$, represented as $r$-dimensional vectors.
    \item CBM: A matrix $\mbf{Q}  \in \mathbb{R}^{k_{known} \times r}$ describing a dictionary of $k_{known}$ \emph{supervised} concepts extracted from $A$, represented as $r$-dimensional vectors.
\end{enumerate}

We wish to compare the known concepts in $\mbf{Q}$ to the unsupervised concepts in $\mbf{D}$. 

The simplest way is using Euclidean geometry over $\mathbb{R}^r$, defining a standard inner product as:
\begin{align}
    \langle \mbf{q}_i, \mbf{d}_j\rangle_E= \mbf{q}_i^\top \mbf{d}_j
\end{align}
where $\mbf{q}_i$, $\mbf{d}_j$ are the $i^{th}$ and $j^{th}$ rows of $\mbf{Q}$ and $\mbf{D}$ respectively, this measures alignment of the SAE and CBM derived concepts in Euclidean space.

However, this geometry is not anchored to the predictive task. Many SAE features may be irrelevant to $Y$, and Euclidean alignment need not reflect task-relevant relationships. Consequently, Euclidean similarity does not guarantee semantically meaningful concept comparisons with respect to the outcome.

A more natural geometry is that of the statistical manifold $(A,p(y|\mbf{a}))$, equipped with the Fisher information metric
\begin{equation}
    F_A(\mbf{a}) = \mathbb{E}_{y \sim p(y|\mbf{a})}\left[\nabla_\mbf{a} \log p(y|\mbf{a})\nabla_\mbf{a} \log p(y|\mbf{a})^\top\right].
\end{equation}

Under our single-layer black-box model, this metric admits a closed-form expression.

\begin{theorem}[Closed-form Fisher metric for one-hidden-layer binary model]
Consider the black-box model defined above with sigmoid output. The Fisher information metric on activation space satisfies
\[
F_A(\mbf{a}) = p(\mbf{a})(1-p(\mbf{a}))\, g(\mbf{a}) g(\mbf{a})^\top,
\]
where $p(\mbf{a})\equiv p(y=1|\mbf{a}) = \sigma(l(\mbf{a}))$ and $g(\mbf{a}) = \nabla_\mbf{a} l(\mbf{a})$.
\end{theorem}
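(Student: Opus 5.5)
The plan is a direct computation of the score function of the Bernoulli model $y \mid \mbf{a} \sim \mathrm{Bernoulli}(p(\mbf{a}))$ with $p(\mbf{a}) = \sigma(l(\mbf{a}))$, followed by taking its expected outer product. Write the log-likelihood as
\[
\log p(y\mid\mbf{a}) = y\log\sigma(l(\mbf{a})) + (1-y)\log\bigl(1-\sigma(l(\mbf{a}))\bigr), \qquad y\in\{0,1\}.
\]
Using $\sigma'(t)=\sigma(t)(1-\sigma(t))$, the derivatives of the two log terms with respect to $l$ are $1-\sigma(l)$ and $-\sigma(l)$. The chain rule then gives
\[
\nabla_{\mbf{a}}\log p(y\mid\mbf{a}) = \bigl(y - p(\mbf{a})\bigr)\, g(\mbf{a}),
\qquad g(\mbf{a}) = \nabla_{\mbf{a}} l(\mbf{a}).
\]
The key structural fact is that the score factorises into a scalar residual $(y-p(\mbf{a}))$ times a vector that does not depend on $y$.

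Next I form the outer product, $(y-p)^2\, g g^\top$, and take the expectation over $y\sim p(y\mid\mbf{a})$. Since $g$ does not depend on $y$, it comes outside the expectation. What remains is $\mathbb{E}[(y-p)^2]$, the variance of a Bernoulli$(p)$ variable. Checking the two outcomes directly, this is $p(1-p)^2 + (1-p)p^2 = p(1-p)$. This gives $F_A(\mbf{a}) = p(\mbf{a})(1-p(\mbf{a}))\,g(\mbf{a})g(\mbf{a})^\top$, as claimed. The result is rank one, with its range spanned by the logit gradient.

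The main subtlety is differentiability. The hidden layer uses ReLU, so $l(\mbf{a}) = \mbf{w}_2^\top \mathrm{ReLU}(\mbf{W}_1\mbf{a}+\mbf{b}_1)+b_2$ is only piecewise linear. I would state the identity on the open, full-measure set where no pre-activation $(\mbf{W}_1\mbf{a}+\mbf{b}_1)_m$ vanishes. On that set the activation pattern is locally constant, and $g(\mbf{a}) = \mbf{W}_1^\top\mathrm{diag}(\mathds{1}[\mbf{W}_1\mbf{a}+\mbf{b}_1>0])\,\mbf{w}_2$. On the measure-zero boundary, I would interpret $g$ as any fixed choice of subgradient, with the same formula holding formally.

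No earlier results are needed beyond the model definition; the rest is routine.
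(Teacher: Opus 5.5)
Your proposal is correct and follows the paper's proof essentially step for step: the Bernoulli log-likelihood, the score factorisation $(y-p(\mbf{a}))\nabla_{\mbf{a}} l(\mbf{a})$, the Bernoulli variance $p(1-p)$, and the explicit ReLU gradient $g(\mbf{a})=\mbf{W}_H^\top(m(\mbf{a})\odot\mbf{w}_l)$, which matches your $\mbf{W}_1^\top\mathrm{diag}(\cdot)\,\mbf{w}_2$. Restricting the identity to the full-measure set where no pre-activation vanishes is a small gain in rigour over the paper, which simply takes the indicator as the ReLU derivative everywhere.
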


The proof and explicit form of $g(\mbf{a})$ are provided in the Methods.

For fixed $\mbf{a}$, $F_A(\mbf{a})$ is a rank-one positive semi-definite matrix, as it is the outer product of $g(\mbf{a})$ with itself. Thus the pointwise Fisher metric captures sensitivity only along a single direction in activation space. Moreover, 
whenever $p(\mbf{a}) \in \{0,1\}$ the prefactor vanishes and $F_A(\mbf{a}) = 0$.  
Consequently, the Fisher geometry is expected to be more informative near the decision boundary.

We therefore define a global, task-aligned quadratic form by averaging over activations near the decision boundary. Let
\[
A_{\mathrm{ave}} := \{ \mbf{a} \in A_{\mathrm{train}} : p(\mbf{a}) \in [p_{\mathrm{low}}, p_{\mathrm{high}}] \}.
\]
We define
\begin{equation} \label{ave_Fisher}
    \bar{F}_A
    = \frac{1}{|A_{\mathrm{ave}}|}
      \sum_{\mbf{a} \in A_{\mathrm{ave}}}
      p(\mbf{a})(1-p(\mbf{a}))\, g(\mbf{a}) g(\mbf{a})^\top.
\end{equation}

%We emphasise that $\bar{F}_A$ is not intended as a Riemannian metric on the full statistical manifold, but as a task-aligned quadratic form on activation space. Our goal is not to characterise local curvature at a single activation, but to obtain a stable global geometry for comparing concept directions, weighted by their contribution to predictive uncertainty.

Equipped with \eqref{ave_Fisher}, we define the task-aligned inner product
\begin{align}
    \langle \mbf{q}_i, \mbf{d}_j\rangle_{F_A} = \mbf{q}_i^\top \bar{F}_A \mbf{d}_j.
\end{align}
This inner product will detect alignment between SAE and CBM derived concepts in a geometry aligned to task prediction and can thus be expected uncover more task-relevant inter-concept semantics than a Euclidean geometry.

\subsection{Defining and measuring concept frustration}

We now formalise a framework to compare supervised and unsupervised concept directions under $\bar{F}_A$.

We define two concept similarity matrices. First, the task aligned similarity between known and unsupervised concepts $\mbf{S}^{F_A}$ with elements:
\begin{equation}
    S^{F_A}_{ij}
    =
    \frac{\mbf{q}_i^\top \bar{F}_A \mbf{d}_j}
    {\|\mbf{q}_i\|_{\bar{F}_A}\,\|\mbf{d}_j\|_{\bar{F}_A}},
\end{equation}
where $\|\mbf{v}\|_{\bar{F}_A} := \sqrt{\mbf{v}^\top \bar{F}_A \mbf{v}}$ and  
$S^{F_A}_{ij} \in [-1,1]$ measures alignment between $\mbf{q}_i$ and $\mbf{d}_j$ in the task-aligned geometry.

Second, the task-aligned similarities between supervised concepts $\mbf{Z}^{F_A}$ with elements:
\begin{equation}
    Z^{F_A}_{rl}
    =
    \frac{\mbf{q}_r^\top \bar{F}_A \mbf{q}_l}
    {\|\mbf{q}_r\|_{\bar{F}_A}\,\|\mbf{q}_l\|_{\bar{F}_A}},
\end{equation}
$Z^{F_A}_{rl} \in [-1,1]$ quantifies the task-aligned relation between known concepts $\mbf{q}_r$ and $\mbf{q}_l$.

For comparison, we define task-agnostic Euclidean similarities:
\begin{equation}
    S^{E}_{ij}
    =
    \frac{\mbf{q}_i^\top \mbf{d}_j}
    {\|\mbf{q}_i\|_2\,\|\mbf{d}_j\|_2},
\end{equation}
with $\mbf{Z}^{E}$ defined analogously for supervised concepts.  

We now formally define concept frustration and related quantities:

\begin{definition}[Concept triplet frustration]
Let $r \neq l$ index two known concepts and let $j$ index an unsupervised direction.  
The triplet $(\mbf{q}_r,\mbf{q}_l,\mbf{d}_j)$ exhibits concept frustration if
\begin{equation}
    \label{eq_concept_triplet_frustration}
    \mathrm{sign}\big(Z^{F_A}_{rl}\big)
    \neq
    \mathrm{sign}\big(S^{F_A}_{rj} S^{F_A}_{lj}\big).
\end{equation}
\end{definition}

%That is, the task-aligned relation between the known concepts contradicts the joint alignment they exhibit with a common unsupervised direction. 
This definition mirrors frustration in spin systems \cite{SpinGlassTheoryAndBeyond}, where spins are discrete random variables with covariances determined by pairwise couplings $J_{ij}$. A triangle of sites $\{r,l,j\}$ is frustrated when the three pairwise couplings $J_{rl}, J_{lj}, J_{jr}$ between those sites cannot be satisfied simultaneously. This occurs when
\[
\mathrm{sign}(J_{rl}J_{lj}J_{jr}) = -1,
\]
which is equivalent to equation \eqref{eq_concept_triplet_frustration}.

\begin{definition}[Maximally frustrating direction]

For each known concept pair $(r,l)$, define the maximally frustrating unsupervised direction $j^*(r,l)$ via
\begin{equation}
    j^*(r,l)
    =
    \arg\max_{j :
    \mathrm{sign}(Z^{F_A}_{rl})
    \neq
    \mathrm{sign}(S^{F_A}_{rj}S^{F_A}_{lj})}
    \big|
    S^{F_A}_{rj} S^{F_A}_{lj}
    \big|.
\end{equation}
If no such $j$ exists, the known concept pair is unfrustrated.
\end{definition}

\begin{definition}[Pairwise frustration]

We define the pairwise frustration for a known concept pair $(r,l)$ as the magnitude of the maximally frustrating direction:
\begin{equation}
    \mathrm{Frust}^{F_A}(r,l)
    =
    \begin{cases}
        \big|S^{F_A}_{rj^*} S^{F_A}_{lj^*}\big| & \text{if } j^*(r,l) \text{ exists},\\
        0 & \text{otherwise}.
    \end{cases}
\end{equation}
\end{definition}
\begin{definition}[Global frustration]
We define the global frustration as the average pairwise frustration across all known concept pairs:
\begin{equation}
    \gamma^{F_A}
    =
    \frac{2}{k_{\mathrm{known}}(k_{\mathrm{known}}-1)}
    \sum_{l<r}
    \mathrm{Frust}^{F_A}(r,l).
\end{equation}
Thus $\gamma^{F_A} \in [0,1]$ measures the average magnitude of task-aligned geometric contradiction between supervised concepts mediated by unsupervised directions.
\end{definition}

We note that via Euclidean similarities $\mbf{S}^E$ and $\mbf{Z}^E$, Definitions 1-4 can be restated in Euclidean geometry yielding the Euclidean global frustration $\gamma^E$, for comparison.

% To explicitly compare known and unsupervised concept alignment in task-aligned and Euclidean geometry we define the following measure.
% \begin{definition}[Geometry task-relevance]
%     Geometry task-relevance $G$ quantifies how strongly task-aligned geometry differs from Euclidean geometry via a normalised Frobenius distance:
% \begin{equation}
%     G
%     =
%     \frac{\|\mbf{S}^{F_A} - \mbf{S}^{E}\|_{\mathrm{Frob}}}
%     {\|\mbf{S}^{F_A}\|_{\mathrm{Frob}}}.
% \end{equation}
% Large $G$ indicates substantial structural differences between Euclidean and task-aligned concept interactions.
% \end{definition}

\subsection{Detecting concept frustration in the flatworld}

As a motivating example, we return to our flatworlder thought experiment. Consider three concepts $C_1,C_2,C_3$ predictive of an outcome $Y$, where only $(C_1,C_2)$ are supervised. Let $C_1$ and $C_2$ denote surface-parallel distance to the North and South poles respectively. For a traveler constrained to the Earth's surface, moving closer to the North pole necessarily moves them farther from the South pole, so the two distances behave as opposing concepts, i.e.,
\[
\mathrm{Cov}(C_1,C_2) < 0.
\]

We now allow the traveler to descend below the surface. In a spherical (round-world) geometry of radius 1, depth changes the surface geometry. Defining depth of a point as its inward distance from the surface, a traveler at depth $d$ lies on a sphere of radius $1-d$, and the supervised concepts are defined as geodesic distances on that sphere to the North and South poles (Figure \ref{fig:treasure}A). Increasing depth therefore shortens both geodesic distances simultaneously. We denote by $C_3$ depth below the surface and assume this concept is unsupervised. On a round Earth this induces
\[
\mathrm{Cov}(C_1,C_3) < 0,
\qquad
\mathrm{Cov}(C_2,C_3) < 0.
\]
Together with $\mathrm{Cov}(C_1,C_2) < 0$, this sign pattern corresponds to the triplet-frustration structure defined in the previous section.

In contrast, under a cylindrical (flatworld) geometry, depth corresponds to translation orthogonal to a flat disk. The supervised concepts are defined as planar distances within the disk at fixed depth. In this case, depth does not alter surface geometry and is independent of the planar distances, so
\[
\mathrm{Cov}(C_1,C_3) = 0,
\qquad
\mathrm{Cov}(C_2,C_3) = 0,
\]
and no covariance-based concept frustration is expected.

This motivates a simple ``treasure hunter'' task in which we determine whether a treasure is within a radius of our location from a signal encoding its position, using an interpretable model with known concepts $(C_1,C_2)$.

Formally, an activation $\mbf{a}$ encodes a 3D location $\mbf{p}$ and the goal is to predict whether $\mbf{p}$ lies within distance $R$ of a fixed reference point $\mbf{e}=(1,0,0)$:
\[
Y(\mbf{p}) = \mathds{1}\{\|\mbf{p}-\mbf{e}\|_2 < R\}.
\]

We compare two geometries: cylindrical and spherical. In both cases we define the same supervised concepts abstractly as surface-parallel distances to the North and South poles,
\[
C_1(\mbf{p})=\mathrm{dist}_{\mathrm{surface}}(\mbf{p},\mathrm{North}),
\qquad
C_2(\mbf{p})=\mathrm{dist}_{\mathrm{surface}}(\mbf{p},\mathrm{South}),
\]
with depth $C_3$ treated as unknown. We consider a linear-Gaussian mapping from concepts to activations to generate the signal $\mbf{a}$. Details are provided in the Methods.

\begin{figure}[H]
    \centering
    \includegraphics[width=\linewidth]{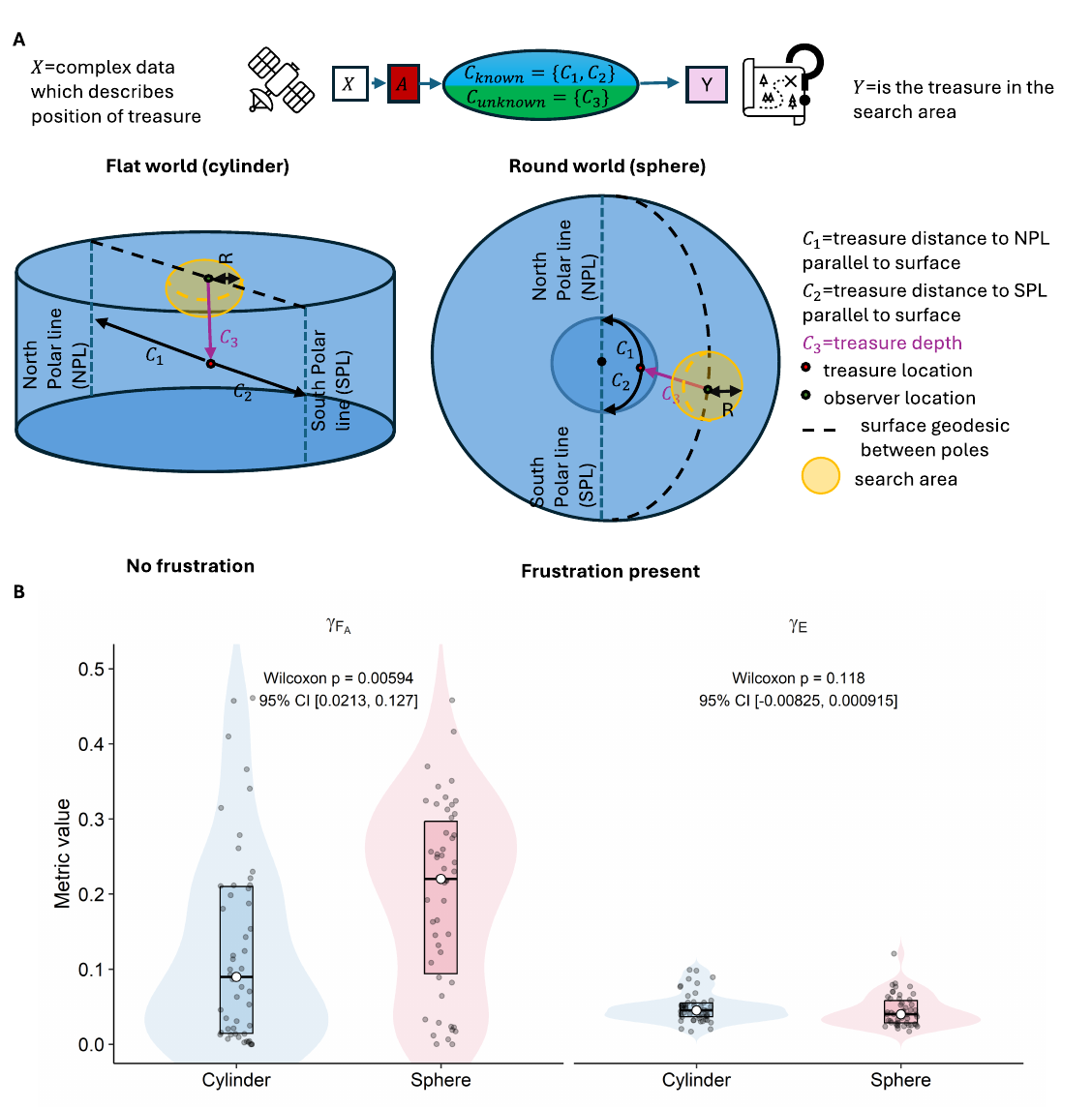}
    \vspace{-0.5em}
    \caption{\textbf{The treasure hunter task and frustration.} A. Consider that we receive a complex signal encoding the position of a treasure, from this signal we wish to predict the binary task of identifying whether a treasure is in a fixed search radius of our position. We employ an interpretable model where known concepts $C_1$ and $C_2$ are the distance to the North and South polar lines parallel to the Earth's surface, and we consider the unknown concept $C_3$ to denote depth below the surface. We consider two geometries: a flat Earth in which there is no frustration between known and unknown concepts, and a round Earth in which there is frustration. B. Box and violin plots display $\gamma_{F_A}$ and $\gamma_E$ calculated from 50 simulations of the flat and round Earth geometry treasure hunter tasks, alongside paired Wilcoxon $p$-values and 95\% confidence intervals of the median difference between round and flat Earth metric values. We see that $\gamma_{F_A}$ detects frustration while $\gamma_{E}$ does not.}
    \label{fig:treasure}
    \vspace{-0.5em}
\end{figure}

Across 50 simulations per geometry, we trained the black-box model, SAE, and CBM and computed the frustration metrics defined above. Figure~\ref{fig:treasure}B shows that the task-aligned frustration score $\gamma^{F_A}$ is significantly higher in the spherical setting than in the cylindrical (paired Wilcoxon $p=5.4\times10^{-3}$), consistent with stronger task-aligned concept frustration in the round-world setting. In contrast, the Euclidean analogue $\gamma^E$ does not distinguish the two settings (paired Wilcoxon $p=0.118$). This indicates that frustration is not detectable from ambient geometric alignment alone, but emerges when concept directions are evaluated in a geometry weighted by their contribution to task uncertainty. Further comparisons are provided in Supplementary Table S1.

\subsection{A data model to control concept frustration}
\label{subsec_data_model}

To study concept frustration under controlled conditions, we construct a synthetic data generator in which the covariance structure of supervised and unsupervised concepts can be manipulated explicitly while preserving the marginal structure of known concepts (see Methods for details). 

We consider a concept vector
\[
\bsym{\chi}=(\bsym{\chi}_k,\bsym{\chi}_u)\in\mathbb{R}^k,
\]
where $\bsym{\chi}_k$ contains $k_{\mathrm{known}}$ supervised concepts and $\bsym{\chi}_u$ contains $k_{\mathrm{unknown}}$ unsupervised concepts. Concepts are sampled from a zero-mean Gaussian distribution with covariance
\[
\mbf{B}(\alpha)=
\begin{pmatrix}
\mbf{B}_{kk} & \mbf{B}_{ku}(\alpha) \\
\mbf{B}_{uk}(\alpha) & \mbf{B}_{uu}(\alpha)
\end{pmatrix},
\]
where $\mbf{B}_{kk}$ is the covariance among known concepts and $\mbf{B}_{ku}(\alpha)$ the cross-covariance between known and unknown concepts. The parameter $\alpha \in [-1,1]$ enforces three regimes. When $\alpha=0$, known and unknown concepts are independent. For $\alpha<0$, cross-block correlations reinforce the correlation structure within $\mbf{B}_{kk}$, producing a non-frustrated regime. For $\alpha>0$, they oppose it, generating concept frustration.

To ensure $\mbf{B}(\alpha)$ is positive semi-definite, we take $\mbf{B}_{uu}(\alpha)$ to be of the form
\[
\mbf{B}_{uu}(\alpha)
=
\mbf{B}_{\mathrm{temp}}
+
\mbf{B}_{uk}(\alpha)\mbf{B}_{kk}^{-1}\mbf{B}_{ku}(\alpha),
\]
with $\mbf{B}_{\mathrm{temp}}$ an $\alpha$-independent baseline covariance capturing variation intrinsic to the unsupervised concepts. %The scalar parameter $\alpha$ therefore changes only the coupling between known and unknown concepts, while leaving the marginal covariance of known concepts $\mbf{B}_{kk}$ fixed across regimes.

We define the task via
\[
\tau=\bsym{\psi}_k^\top \bsym{\chi}_k+\bsym{\psi}_u^\top \bsym{\chi}_u+\eta,
\qquad
\eta\sim\mathcal{N}(0,\sigma_y^2),
\]
where $\bsym{\psi}=(\bsym{\psi}_k,\bsym{\psi}_u)$ specifies the task-relevant directions on known and unknown concepts, and $\sigma_y^2$ controls label noise. Binary labels are then generated as
$
y=\mathds{1}\{\tau>0\}.
$
A separate parameter $\omega$ controls how strongly the task depends on known versus unknown concepts through the construction of $\bsym{\psi}$. Activations are generated from concepts through a linear-Gaussian map.

This provides a controlled environment in which (i) the supervised concept ontology remains fixed across regimes, (ii) frustration strength is governed solely by $\alpha$, and (iii) the task dependence on unknown structure is governed by $\omega$. 

% This synthetic model separates three effects that are often confounded in real data: the covariance structure of supervised concepts ($\mbf{B}_{kk}$), the recoverable influence of unknown concepts through cross-covariances ($\mbf{B}_{ku}(\alpha)$), and the task dependence on hidden concept directions ($\bsym{\psi}_u$). As shown below, these quantities jointly determine the concept-optimal accuracy achievable using supervised concepts alone.

\subsection{Concept-optimal  accuracy can be interpreted in terms of frustration}

We now consider the concept-optimal Bayes classifier for the synthetic data defined in Section \ref{subsec_data_model}, that minimises the expected error of predicting the task, given knowledge of supervised concepts. We find that the accuracy of this classifier admits a closed form expression.

\begin{theorem}[Concept-optimal Accuracy in the Linear–Gaussian Model]
\label{thm:optimal_cbm_accuracy}
Consider the synthetic model defined in Section \ref{subsec_data_model}.
% Let the concept vector be denoted $\bsym{\chi}=(\bsym{\chi}_k,\bsym{\chi}_u)$ where $\bsym{\chi}_{k}=(c_1,\cdots,c_{k_{\mathrm{known}}})$ denotes known concepts and $\bsym{\chi}_{u}=(c_{k_{\mathrm{known}}+1},\cdots,c_{k})$ unknown concepts, under our model $\bsym{\chi} = (\bsym{\chi}_k,\bsym{\chi}_u) \sim \mathcal{N}(0,\mbf{B})$ with block covariance
% \[
% \mbf{B} =
% \begin{pmatrix}
% \mbf{B}_{kk} & \mbf{B}_{ku} \\
% \mbf{B}_{uk} & \mbf{B}_{uu}
% \end{pmatrix},
% \]

% where
% \[
% \mbf{B}_{uu}(\alpha)
% =
% \mbf{B}_{temp} + \mbf{B}_{uk}(\alpha) \mbf{B}_{kk}^{-1} \mbf{B}_{ku}(\alpha),
% \]
% and $\mbf{B}_{kk}$ is positive definite (and thus invertible).

% Let the weight vector be denoted $\bsym{\psi} =(\bsym{\psi}_k,\bsym{\psi}_u)$ where $\bsym{\psi}_{k}=(w_1,\cdots,w_{k_{\mathrm{known}}})$ denotes known concept-task weights and $\bsym{\psi}_{u}=(w_{k_{\mathrm{known}}+1},\cdots,w_{k})$ unknown concept-task weights.

% Let the task variable be
% \[
% \tau = \bsym{\psi}_k^\top \bsym{\chi}_k + \bsym{\psi}_u^\top \bsym{\chi}_u + \eta,
% \qquad
% \eta \sim \mathcal{N}(0,\sigma_y^2),
% \]
% and define the binary label
% \[
% y = \mathds{1}\{\tau > 0\}.
% \]
Then the  concept-optimal Bayes classifier using only the known concepts $\bsym{\chi}_k$,
which minimises the risk
\[ R( \hat y) = \mathbb{E}_{(\bsym{\chi}_k,y)} \left[\mathds{1}\{ \hat{y}(\bsym{\chi}_k) \neq y \} \right],\]
 is
\[
\hat{y}(\bsym{\chi}_k)
=
\mathds{1}\{\bsym{\phi}(\alpha)^\top \bsym{\chi}_k > 0\},
\]
and its classification accuracy is
\[
\mathrm{Acc}_{\textrm{CBM}}(\alpha)
=
\frac{1}{2}
+
\frac{1}{\pi}
\arctan
\left(
\sqrt{
\frac{
\bsym{\phi}(\alpha)^\top \mbf{B}_{kk} \bsym{\phi}(\alpha)
}{
\bsym{\psi}_u^\top \mbf{B}_{temp} \bsym{\psi}_u + \sigma_y^2
}
}
\right),
\]
where
\[
\bsym{\phi}(\alpha)
=
\bsym{\psi}_k + \mbf{B}_{kk}^{-1} \mbf{B}_{ku}(\alpha) \bsym{\psi}_u,
\]

\end{theorem}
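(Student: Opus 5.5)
The plan is to condition on the known concepts and reduce everything to a one-dimensional Gaussian computation. Since $(\bsym{\chi}_k,\bsym{\chi}_u)$ is jointly Gaussian with zero mean, the conditional law of the unknown block is
\[
\bsym{\chi}_u \mid \bsym{\chi}_k \sim \mathcal{N}\big(\mbf{B}_{uk}(\alpha)\mbf{B}_{kk}^{-1}\bsym{\chi}_k,\; \mbf{B}_{uu}(\alpha)-\mbf{B}_{uk}(\alpha)\mbf{B}_{kk}^{-1}\mbf{B}_{ku}(\alpha)\big).
\]
By the chosen form of $\mbf{B}_{uu}(\alpha)$, the Schur complement is exactly $\mbf{B}_{\mathrm{temp}}$. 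Because $\eta$ is independent of the concepts, $\tau\mid\bsym{\chi}_k$ is Gaussian with mean $\bsym{\psi}_k^\top\bsym{\chi}_k+\bsym{\psi}_u^\top\mbf{B}_{uk}\mbf{B}_{kk}^{-1}\bsym{\chi}_k=\bsym{\phi}(\alpha)^\top\bsym{\chi}_k$, using the symmetry $\mbf{B}_{uk}=\mbf{B}_{ku}^\top$. Its variance is $s^2:=\bsym{\psi}_u^\top\mbf{B}_{\mathrm{temp}}\bsym{\psi}_u+\sigma_y^2$, which does not depend on $\bsym{\chi}_k$.

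Next, the Bayes classifier under $0$--$1$ loss predicts $1$ iff $P(y=1\mid\bsym{\chi}_k)=\Phi(\bsym{\phi}^\top\bsym{\chi}_k/s)>1/2$, which is equivalent to $\bsym{\phi}^\top\bsym{\chi}_k>0$. This gives the stated classifier. Its pointwise accuracy is $\Phi(|\bsym{\phi}^\top\bsym{\chi}_k|/s)$, so
\[
\mathrm{Acc}=\mathbb{E}\,\Phi(|m|/s),\qquad m:=\bsym{\phi}^\top\bsym{\chi}_k\sim\mathcal{N}(0,v),\quad v=\bsym{\phi}^\top\mbf{B}_{kk}\bsym{\phi}.
\]

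To get the arctan formula, the cleanest route is to view the accuracy as $P(\mathrm{sign}(m)=\mathrm{sign}(\tau))$. Write $\tau=m+\varepsilon$ with $\varepsilon\sim\mathcal{N}(0,s^2)$ independent of $m$; this decomposition is valid because the conditional variance is constant. Then $(m,\tau)$ is a centered bivariate Gaussian with $\mathrm{Cov}(m,\tau)=v$, $\mathrm{Var}(m)=v$ and $\mathrm{Var}(\tau)=v+s^2$, so its correlation is $\rho=\sqrt{v/(v+s^2)}$. Sheppard's orthant formula gives $P(\text{same sign})=\tfrac12+\tfrac{1}{\pi}\arcsin\rho$. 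It remains to check that $\arcsin\big(\sqrt{v/(v+s^2)}\big)=\arctan\big(\sqrt{v/s^2}\big)$, which follows from the right triangle with legs $\sqrt v$ and $s$. This yields the claimed expression.

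The main point requiring care is the second step. One must justify that the Bayes classifier depends only on the sign of the conditional mean; this uses the $\bsym{\chi}_k$-independence of the conditional variance, which is exactly what the Schur-complement construction guarantees. One must also handle the degenerate cases $\bsym{\phi}=0$ or $s=0$ as limits. Sheppard's formula itself I would either cite or verify quickly by rotating to independent standard normals and measuring the angle of the relevant wedge.
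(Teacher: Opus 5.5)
Your proposal is correct and follows the paper's proof essentially step for step: the Gaussian conditional law of $\tau$ given $\bsym{\chi}_k$ with the Schur complement collapsing to $\mbf{B}_{temp}$, the Bayes rule as the sign of $\bsym{\phi}(\alpha)^\top\bsym{\chi}_k$, the split $\tau = S + R$ into independent Gaussian pieces, Sheppard's orthant formula, and the arcsin-to-arctan conversion. One small correction to your framing: the $\bsym{\chi}_k$-independence of the conditional variance is automatic for any jointly Gaussian model, and the Bayes rule would depend only on the sign of the conditional mean even without it; what the Schur-complement construction actually buys is that this variance equals $\bsym{\psi}_u^\top\mbf{B}_{temp}\bsym{\psi}_u+\sigma_y^2$, which makes it independent of $\alpha$.
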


The proof of this theorem is provided in the Methods.

The denominator term $\bsym{\psi}_u^\top \mbf{B}_{temp} \bsym{\psi}_u + \sigma_y^2$ is independent of $\alpha$, thus
\[
\mathrm{Acc}_{\textrm{CBM}}(\alpha_1)
>
\mathrm{Acc}_{\textrm{CBM}}(\alpha_2)
\quad
\Longleftrightarrow
\quad
\bsym{\phi}(\alpha_1)^\top \mbf{B}_{kk} \bsym{\phi}(\alpha_1)
>
\bsym{\phi}(\alpha_2)^\top \mbf{B}_{kk} \bsym{\phi}(\alpha_2).
\]

Expanding this term yields
\begin{align*}
  \bsym{\phi}(\alpha)^\top \mbf{B}_{kk}\bsym{\phi}(\alpha)
  &= \langle \bsym{\psi}_k,\mbf{B}_{kk}\bsym{\psi}_k\rangle_E + 2\langle \bsym{\psi}_k,\mbf{B}_{ku}(\alpha)\bsym{\psi}_u\rangle_E +\langle\bsym{\psi}_u,\mbf{B}_{uk}(\alpha)\mbf{B}_{kk}^{-1}\mbf{B}_{ku}(\alpha) \bsym{\psi}_u\rangle_E.\\
  &= T_1+2T_2+T_3
\end{align*}
This admits a geometric interpretation of optimal accuracy.
\begin{enumerate}
    \item Supervised signal:
    \[
    T_1:=\langle \bsym{\psi}_k, \mbf{B}_{kk} \bsym{\psi}_k \rangle_E = ||\bsym{\psi}_k||^2_{\mbf{B}_{kk}}.
    \]
    This measures task-relevant signal present in known concepts alone. Geometrically, it is the squared length of the supervised task direction $\bsym{\psi}_k$ in the covariance geometry induced by $\mbf{B}_{kk}$. 
    This term is invariant under frustration.
    \item Cross-alignment between supervised and unknown signal:
    \[
    T_2(\alpha) := \langle \bsym{\psi}_k, \mbf{B}_{ku}(\alpha)\bsym{\psi}_u \rangle_E.
    \]
    This term captures how the unknown task direction $\bsym{\psi}_u$ projects into known-concept space through the cross-covariance matrix $\mbf{B}_{ku}(\alpha)$, and whether this projection reinforces or interferes with the supervised task direction $\bsym{\psi}_k$. 
    Positive alignment increases accuracy, whereas negative alignment reduces it. 
    Frustration acts primarily by modifying this term.
    \item Recoverable unknown signal:
    \[
    T_3(\alpha):= \langle \bsym{\psi}_u, \mbf{B}_{uk}(\alpha)\mbf{B}_{kk}^{-1}\mbf{B}_{ku}(\alpha) \bsym{\psi}_u \rangle_E=||\bsym{\psi}_u||^2_{\mbf{B}_{uk}(\alpha)\mbf{B}_{kk}^{-1}\mbf{B}_{ku}(\alpha)}.
    \]
    %This matrix is positive semi-definite and hence $T_3(\alpha) \ge 0$. 
    This term quantifies the magnitude of unknown task-relevant variation that becomes linearly recoverable from known concepts via cross-block correlations. 
    It measures signal strength rather than directional alignment, and is zero if known and unknown concepts are independent ($\alpha=0$).
\end{enumerate}

% Thus, frustration influences CBM accuracy through how the unknown task direction is rotated within the covariance geometry of the known concepts. 

The term 
\begin{align*}
   \bsym{\psi}_u^\top \mbf{B}_{temp} \bsym{\psi}_u + \sigma_y^2&= ||\bsym{\psi}_u||_{\mbf{B}_{temp}}^2 + \sigma_y^2\\
    &= T_4 +\sigma_y^2
\end{align*}
also admits a geometric interpretation: $T_4$ is the unknown signal not recoverable from supervised concepts, and the larger this term is the worse concept-optimal accuracy.

Informally, $T_1$ describes known-knowns, $T_2$ and $T_3$ known-unknowns and $T_4$ unknown-unknowns.

% When trained on the synthetic data defined in Section \ref{subsec_data_model}, a generic CBM learns concepts $\hat{\mbf{c}}_{\mathrm{known}}$ with covariance $\mathrm{Cov}(\hat{\mbf{c}}_{\mathrm{known}})$  that may differ from the ground-truth covariance structure $\mbf{B}_{\mathrm{known}}$. To quantify this decrease in interpretability, we define the following score.
% \begin{definition}[Semantic fidelity]
% The semantic fidelity score $\beta$ measures deviation between the covariance of predicted known concepts and the ground-truth covariance structure
% \begin{equation} \label{eq_semantic_fidelity}
% \beta =
% \frac{
% \| \mathrm{Cov}(\hat{\mbf{c}}_{\mathrm{known}}) - \mbf{B}_{\mathrm{known}} \|_{\mathrm{Frob}}
% }{
% \| \mbf{B}_{\mathrm{known}} \|_{\mathrm{Frob}}
% }.
% \end{equation}
% \end{definition}

\subsection{Frustration between known and unknown concepts limits CBM task accuracy}

We trained CBMs, black-box models, and SAEs across 60 parameter regimes of our synthetic data generator (see Methods). Each regime was repeated across 10 independent seeds, yielding 600 matched datasets. Full results are provided in Supplementary Table S2.

%CBM task accuracy decreased as the task became increasingly dependent on unknown concepts (increasing $\omega$). When $\omega=0$, all task-relevant signal was contained within known concepts and CBM accuracy was maximal. As $\omega \to 1$, CBM performance declined (Figure \ref{fig:acc_frust}). % Since known concepts contain no predictive information in this regime, any excess performance must arise from unsupervised task-relevant information encoded within the learned concept representations themselves (concept leakage). Indeed $26.3\%$ of jointly trained CBMs models exceeded Bayes-optimal CBM accuracy given known concepts (mean excess $0.139$), which is only possible under leakage. We also trained CBMs using sequential and independent training strategies (Methods). Leakage was less pronounced under sequential training ($21.2\%$ models exceed Bayes-optimal accuracy, mean excess $0.0541$) and lowest under independent training ( $14.1\%$ models exceed Bayes-optimal accuracy, mean excess $0.0272$) (Figure~\ref{fig:leakage}). This pattern is consistent with expected patterns of leakage across the three training regimes and highlights that CBMs demonstrate leakage to varying degrees, regardless of training strategy. We note that models which showed leakage did not show a consistent association with values of $\alpha$, confirming that leakage and frustration are separate processes (Figure \ref{fig:leakage}).

Frustration significantly reduced CBM task accuracy. CBM accuracy was significantly lower under $\alpha=+1$ compared to both $\alpha=-1$ and $\alpha=0$ (paired Wilcoxon $p<10^{-4}$; Figure~\ref{fig:acc_interpret}A). In contrast, black-box model accuracy did not differ between $\alpha=+1$ and $\alpha=-1$ (paired Wilcoxon $p=0.13$, Figure~\ref{fig:acc_interpret}B), and was slightly worse when $\alpha=0$, indicating that frustration limits CBM performance without increasing intrinsic task difficulty. %In fact frustration may slightly reduce intrinsic task difficulty compared to independent concept regimes.

\begin{figure}[H]
    \centering
    \includegraphics[width=0.94\linewidth]{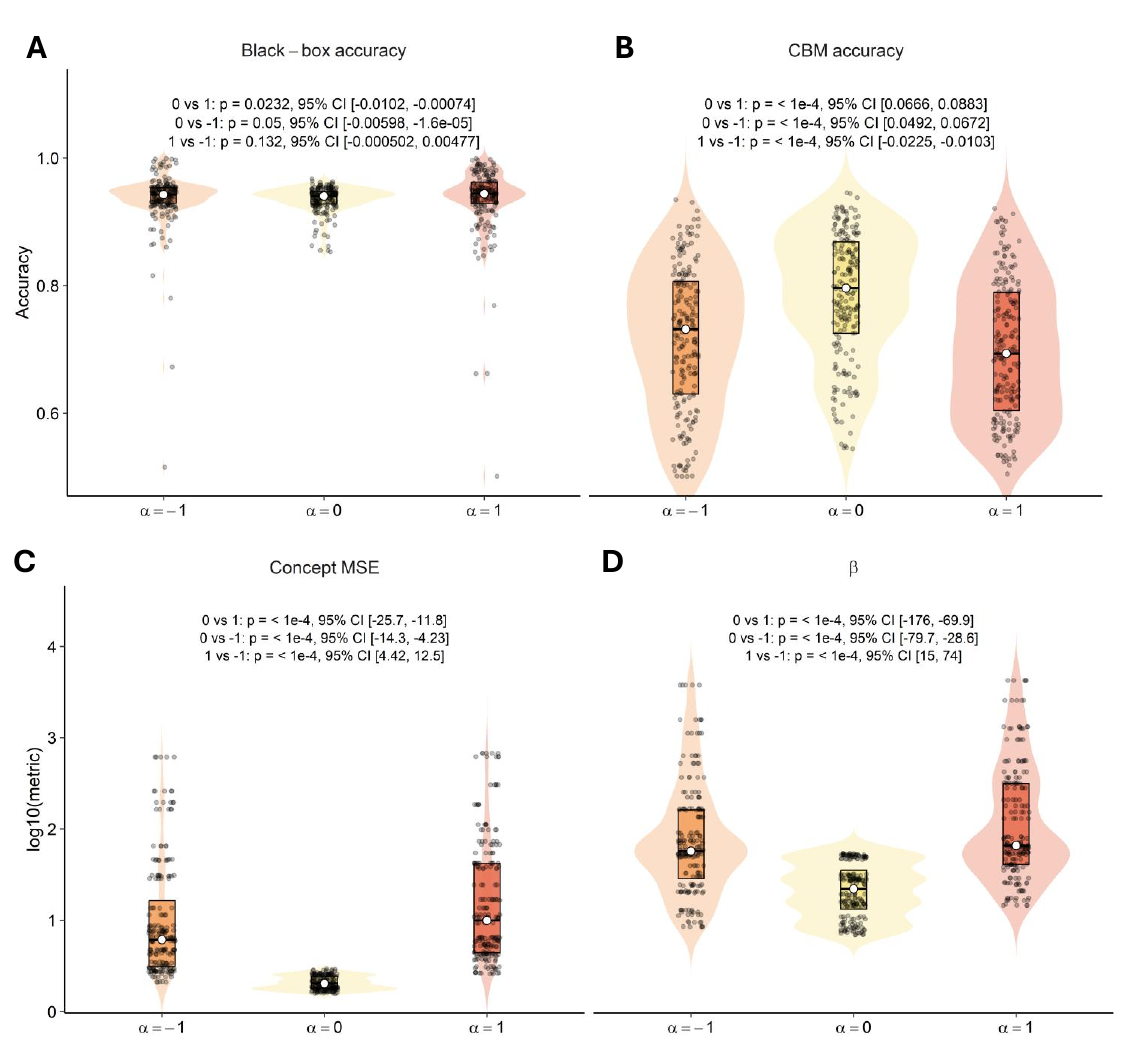}
    \vspace{-0.5em}
    \caption{\textbf{CBM accuracy and interpretability are degraded under frustration.} Box and violin plots display (A) accuracy of black box classifiers (B) accuracy of CBM classifiers (C) concept mean-square error (MSE) for CBM classifiers and (D) concept semantic fidelity metric $\beta$ for $\alpha \in \{-1,0,1\}$, calculated from 600 simulations of our synthetic data generator across a range of parameter regimes, alongside paired Wilcoxon $p$-values and 95\% confidence intervals of the median difference between $\alpha$ values. We see that frustration significantly degrades accuracy and interpretability of CBMs without making the task more difficult for a non-interpretable model}
    \label{fig:acc_interpret}
    \vspace{-0.5em}
\end{figure}

\subsection{Frustration impairs semantic interpretability of CBMs}

We next examined the effect of frustration on concept-level interpretability.

Frustration significantly impaired concept prediction accuracy. Concept mean squared error (MSE) was significantly higher under $\alpha=+1$ compared to both $\alpha=-1$ and $\alpha=0$ (paired Wilcoxon $p<10^{-4}$; Figure \ref{fig:acc_interpret}C).

When trained on the synthetic data, a CBM learns concepts $\hat{\mbf{c}}_{\mathrm{known}}$ with covariance $\mathrm{Cov}(\hat{\mbf{c}}_{\mathrm{known}})$  that may differ from the ground-truth covariance structure $\mbf{B}_{\mathrm{known}}$. To quantify this decrease in interpretability, we define the following score.
\begin{definition}[Semantic fidelity]
The semantic fidelity score $\beta$ measures deviation between the covariance of predicted known concepts and the ground-truth covariance structure
\begin{equation} \label{eq_semantic_fidelity}
\beta =
\frac{
\| \mathrm{Cov}(\hat{\mbf{c}}_{\mathrm{known}}) - \mbf{B}_{\mathrm{known}} \|_{\mathrm{Frob}}
}{
\| \mbf{B}_{\mathrm{known}} \|_{\mathrm{Frob}}
}.
\end{equation}
\end{definition}
Under frustration, $\beta$ increased markedly (paired Wilcoxon $p<10^{-4}$, Figure \ref{fig:acc_interpret}D), indicating distortion of the learned inter-concept relationships.

Thus, frustration degrades both predictive and structural interpretability of CBMs.

\subsection{A task-aligned geometry detects known-unknown concept frustration}

We next evaluated whether frustration can be detected via our metric.

The task-aligned frustration metric $\gamma^{F_A}$ was significantly higher under $\alpha=+1$ compared to both $\alpha=-1$ and $\alpha=0$ (paired Wilcoxon $p<10^{-4}$; Figure \ref{fig:sim_frust}A). In contrast, the Euclidean analogue $\gamma^{E}$ did not reliably distinguish frustration (Figure \ref{fig:sim_frust}A). Moreover, $\gamma^{F_A}$ was elevated when the theoretical cross alignment term $T_2$ was non-zero (paired Wilcoxon $p<10^{-4}$), whereas $\gamma^{E}$ was not (Figure \ref{fig:sim_frust}B). 

Thus, frustration is detectable in task-aligned Fisher geometry but not in Euclidean geometry.

\begin{figure}[H]
    \centering
    \includegraphics[width=\linewidth]{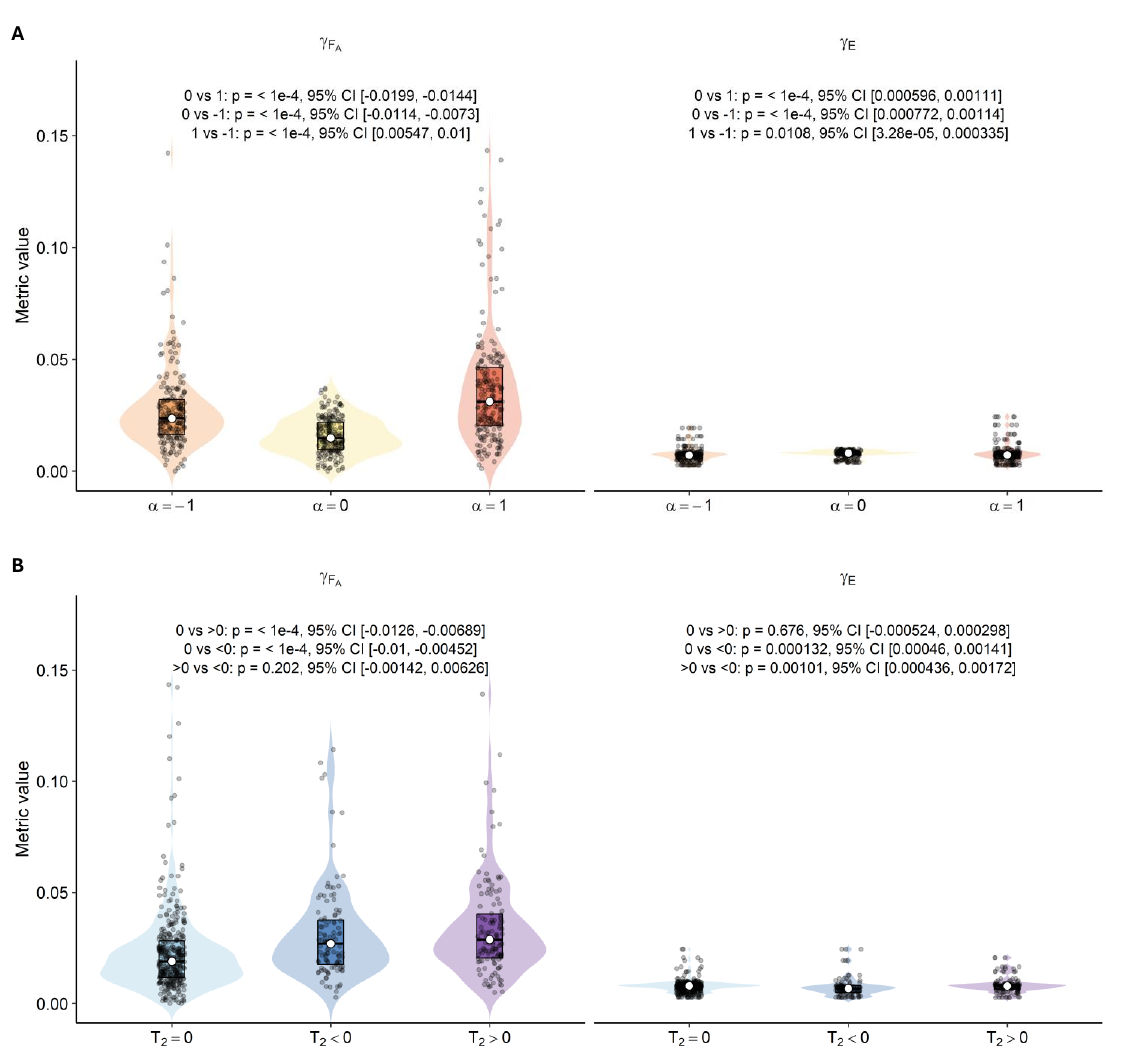}
    \vspace{-0.5em}
    \caption{\textbf{$\gamma_{F_A}$ detects frustration in inter-concept covariance and concept task weights between known and unknown concepts} Box and violin plots display $\gamma_{F_A}$ and $\gamma_E$ calculated from 600 simulations of our synthetic data model across 60 parameter regimes separated by (A) $\alpha$ and (B) the sign of $T_2$ the theoretical cross alignment term between known and unknown signal. Paired Wilcoxon $p$-values and 95\% confidence intervals of the median difference between metric values across various parameter groups are displayed. We see that $\gamma_{F_A}$ is elevated when concept covariance between known and unknown concepts is frustrated, while $\gamma_{E}$ is not. Similarly we see that $\gamma_{F_A}$ is consistently elevated when the cross alignment term $T_2$ is non-zero, while $\gamma_E$ is not.}
    \label{fig:sim_frust}
    \vspace{-0.5em}
\end{figure}

\subsection{Resolving concept frustration in real-world language and vision tasks}

We next evaluated whether concept frustration can be detected in real-world language and vision tasks.% embedded in foundation model representation spaces.

For language we considered sarcasm detection. We used a dataset of 28,000 news headlines labelled with sarcasm \cite{misra2023Sarcasm} and embedded them using the pretrained DeBERTa-v3-base language model \cite{he2021debertav3} (Methods). %, yielding a fixed 768-dimensional representation per headline. 
From a pretrained RoBERTa sentiment classifier \cite{barbieri-etal-2020-tweeteval} we derived three sentiment-related concepts: $C_1$ (positive sentiment), $C_2$ (negative sentiment), and $C_3$ (sentiment intensity). In this dataset, $C_1$ and $C_2$ are inversely correlated, while $C_3$ is positively correlated with both, forming a frustrating triple. A linear model confirmed all three concepts independently associated with sarcasm.

For vision we considered a binary classification task distinguishing gulls from terns in the CUB-200-2011 dataset \cite{wah2011caltech}. Images were embedded using the pretrained CLIP ViT-B/16 vision encoder \cite{Radford2021}. From the CUB attribute annotations we defined three concepts: $C_1$ (white upperparts), $C_2$ (gull-like shape), and $C_3$ (solid wing pattern). Restricting to gull and tern species, $C_1$ and $C_2$ were positively correlated, while $C_3$ was negatively correlated with $C_2$ and positively correlated with $C_1$, forming a frustrating triple. All three concepts were independently associated with the classification outcome.

For each task we trained two CBMs on the same embedding space. CBM1 used $(C_1,C_2)$ as supervised concepts, treating $C_3$ as unknown. CBM2 used all three concepts $(C_1,C_2,C_3)$ as supervised. Both models were compared against the same SAE dictionary $\mbf{D}$ learned from the embedding space. Frustration was computed only on the concept pair $(C_1,C_2)$ for both models as $\mathrm{Frust}^{F_A/E}(1,2)$ .

Across both tasks, $\mathrm{Frust}^{F_A}(1,2)$ was significantly higher in CBM1 than in CBM2 (paired Wilcoxon tests $p=0.0488$, Figures \ref{fig:real_world}A\&B). In contrast, the Euclidean analogue $\mathrm{Frust}^{E}(1,2)$ did not differ between models. Further results are provided in Supplementary Tables S3\&4.%Task accuracy was modestly higher in CBM2 in both tasks, although the difference did not reach statistical significance (Supplementary Tables S3\&4).

These results indicate that when a task-relevant frustrating concept is omitted, the model learns representations of the remaining concepts that are geometrically inconsistent with the fixed unsupervised feature basis. When the frustrating concept is incorporated into the supervised ontology, the learned representation of the shared concepts reorganises and becomes less frustrated with the same SAE directions in the task-aligned geometry. Thus, incorporating a previously hidden concept does not merely improve ontology completeness; it changes how the model internally represents relationships between existing concepts.

This mirrors how human conceptual understanding can change when a previously unrecognised explanatory factor is introduced, reorganising relationships between existing concepts \cite{Gershman2012,Schneider2012}.

\begin{figure}[H]
    \centering
    \includegraphics[width=\linewidth]{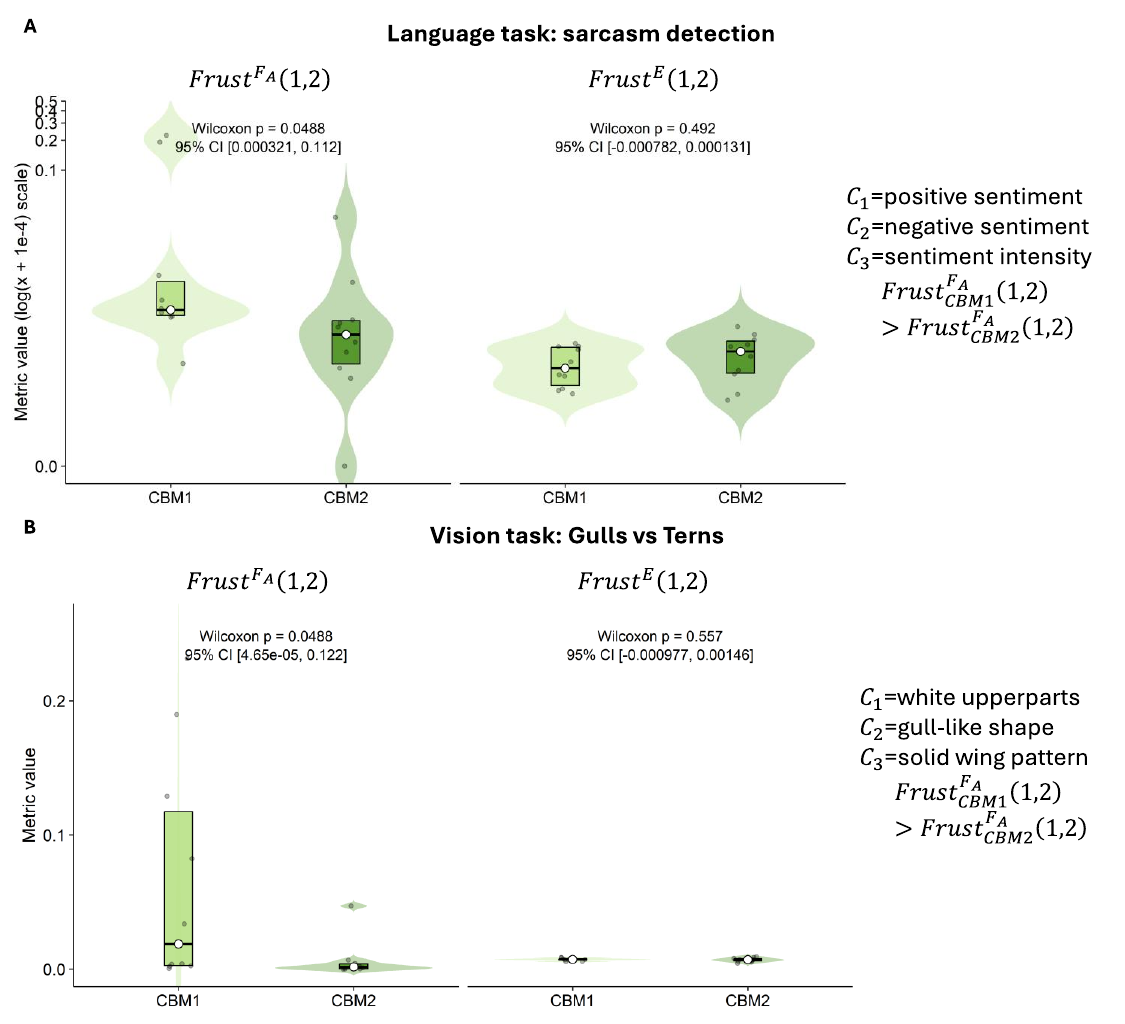}
    \vspace{-0.5em}
    \caption{\textbf{The value of frustration in real world language and vision tasks.} 
    Box and violin plots display $\textrm{Frust}^{F_A}(1,2)$ and $\textrm{Frust}^{E}(1,2)$ for CBM1 and CBM2, calculated from 10 fold cross-validation of models trained to predict (A) a language task: sarcasm detection and (B) a vision task: discriminating pictures of gulls from terns. Paired Wilcoxon $p$-values and 95\% confidence intervals of the median difference between metric values between CBM1 and CBM2 are displayed. We see that $\textrm{Frust}^{F_A}_{CBM1}(1,2)>\textrm{Frust}^{F_A}_{CBM2}(1,2)$ in both settings while $\textrm{Frust}^{E}_{CBM1}(1,2)=\textrm{Frust}^{E}_{CBM2}(1,2)$.}
    \label{fig:real_world}
    \vspace{-0.5em}
\end{figure}

\section{Discussion}

Alignment between human-interpretable reasoning and machine representations remains a central challenge in explainable AI \cite{Banerji2025, Marconato23}. A substantial body of work assumes that unsupervised machine representations are only valuable when orthogonal to human-defined concepts \cite{higgins2017betavae, Chen2020, ismail2024concept, MENG2026108346}. Our results suggest a different perspective: machine-derived concepts can instead induce relationships between known concepts that contradict their expected semantics, motivating a reorganisation of the geometry of the known concept space. We term such concepts \emph{frustrating}. Their presence degrades both accuracy and interpretability of supervised concept models. In real-world vision and language tasks, incorporating a frustrating concept into a supervised ontology reorganises learned representations of known concepts, restoring geometric consistency between human and machine representations.

% Alignment between human-interpretable reasoning and machine representations remains a central challenge in explainable AI \cite{Banerji2025, Marconato23}. A substantial body of work assumes unsupervised machine representations are only valuable when orthogonal to human-defined concepts \cite{higgins2017betavae, Chen2020, ismail2024concept, MENG2026108346}. Our results suggest a different perspective. We highlight that machine concepts may instead induce relationships between known concepts that contradict
% their expected semantics, motivating a reorganisation of the geometry of the known concept space. We say that such machine-concepts are frustrating. We demonstrate that once a frustrating concept is incorporated into the a supervised ontology, the learned representations of the known concepts reorganise and restore geometric consistency between human and machine concept representations.

This process resembles how conceptual change occurs in scientific discovery \cite{kuhn2012structure}, where progress often arises not by introducing independent variables but by discovering a concept that forces reinterpretation of existing knowledge. Recognising that the Earth is round rather than flat resolves
contradictions in geodesic distances observed when moving in depth. %Galileo’s heliocentric modelreorganised the relationships between planetary motions that appeared inconsistent under geocentric models. 
Similarly, Einstein’s theory of relativity reorganised the classical understanding of space, time, and motion, resolving tensions that could not be explained within Newtonian mechanics. Our results suggest interpretable machine learning systems may undergo an analogous process: the discovery or inclusion of a previously hidden concept can reorganise the internal geometry of supervised concept representations and resolve contradictions between human and machine reasoning. We formalise this via the notion of concept frustration.

Though encouraging, our results require some considerations. First, our theoretical analysis relies on a linear–Gaussian generative model which enables analytical tractability. %and yields a closed-form expression for Bayes-optimal CBM accuracy. 
This is not an unrealistic assumption: CBMs typically employ a linear mapping from concepts to outcomes to preserve interpretability \cite{Koh2020}, and foundation models are often assumed to provide embeddings from which relevant concepts are approximately linearly recoverable \cite{choi2024adaptive, bhalla2024interpreting, linearrepresentationhyp}. However, real-world representation spaces may exhibit more complex associations. Second, our measure of pairwise frustration depends on selecting a maximally frustrating unsupervised direction from the SAE dictionary. While this choice captures the strongest geometric contradiction between concept directions, alternative strategies may provide complementary insights. Third, the task-aligned geometry is derived from a quadratic form based on the Fisher information metric of a simple black-box classifier. While this construction provides an analytically tractable geometry, more complex predictive models may induce richer structures.

Promising directions of future work include extending the theory beyond linear–Gaussian settings, analysing higher-order structures in concept similarity matrices, elaborating the connection to frustration in spin systems, and applying frustration-based diagnostics to large foundation models. Our perspective also motivates new model architectures that combine supervised concepts with unsupervised concept discovery. %For instance, concept bottleneck models could be augmented with latent channels learned through sparse autoencoders, not by enforcing independence between known and unknown concepts but by explicitly identifying and resolving frustration between them. 
More broadly, viewing interpretability through the lens of geometric consistency between human concepts and machine representations provides a principled framework for identifying missing conceptual dimensions and refining ontologies through which both humans and machines reason about complex systems.

\section{Methods}

\subsection{Model architectures}
\label{sec: model architecture}

\paragraph{Black box model.} We employ a feedforward architecture with a single hidden layer of width $h$ and ReLU activation, \begin{align*}
H(\mbf{a}) = \mathrm{ReLU}(\mbf{W}_H \mbf{a} + b_H)\\
l(\mbf{a}) = \mbf{w}_l^\top H(\mbf{a}) + b_l,\\
\hat{y} = \sigma(l(\mbf{a}))
\end{align*}
where $H: \mathbb{R}^r \rightarrow \mathbb{R}^h$, $l:\mathbb{R}^h \rightarrow \mathbb{R}$ and $\sigma$ is the sigmoid function. $\mbf{W}_H \in \mathbb{R}^{h \times r}$ is a learned matrix of weights and $\mbf{w}_l \in \mathbb{R}^h$ is a learned vector of weights. $b_H \in \mathbb{R}^h$ is a learned vector of biases and $b_l \in \mathbb{R}$ is a learned bias.

For binary tasks, parameters are learned by minimizing the empirical binary cross-entropy loss
\[
\mathcal{L}_{\text{BB}} 
= \mathbb{E}\big[ \ell_{\text{BCE}}(\hat{y}(\mbf{a}), y) \big].
\]

\paragraph{Sparse Autoencoder.} 
SAEs are trained by minimizing
\[
\mathcal{L}_{\text{SAE}} 
= \mathbb{E}\big[ \|\mbf{a} - \hat{\mbf{a}}\|_2^2 \big] 
+ \lambda_{\text{SAE}} \, \mathbb{E}\big[ \|s(\mbf{a})\|_1 \big],
\]
where $\lambda_{\text{SAE}} > 0$ controls sparsity.

\paragraph{Concept Bottleneck Model.}
We parameterise CBMs with a linear concept mapping \[ \hat{\mbf{c}}_{\text{known}}(\mbf{a}) = \mbf{Q} \mbf{a}, \] with $\mbf{Q} \in \mathbb{R}^{k_{\text{known}} \times r}$. The predicted concepts are then passed to a linear task head $f$, and the final binary prediction $\hat{y}(\mbf{a})$ is obtained via a sigmoid transformation:
\[
\hat{y}(\mbf{a}) = \sigma(\mbf{w}_{\textrm{CBM}}^\top \hat{\mbf{c}}_{\text{known}}(\mbf{a}) + b_{\textrm{CBM}}),
\]

We consider CBM joint training, where concept and task losses are optimized simultaneously:
\[
\mathcal{L}_{\mathrm{joint}}
=
\mathcal{L}_{\mathrm{task}}(y,\hat{y})
+
\lambda_C \,
\mathcal{L}_{\mathrm{concept}}(\mbf{c}_{\mathrm{known}}, \hat{\mbf{c}}_{\mathrm{known}}),
\]
where
\[
\mathcal{L}_{\mathrm{concept}}
=
\frac{1}{n}
\sum_{i=1}^n
\left\|
\mbf{c}^{(i)}_{\mathrm{known}} - \hat{\mbf{c}}^{(i)}_{\mathrm{known}}
\right\|_2^2,
\]
and $\mathcal{L}_{\mathrm{task}}$ is binary cross-entropy, and $\lambda_C$ balances concept prediction and task prediction.  
Both the concept predictor and task head are updated jointly.

% We consider three training regimes.
% \begin{itemize}
%     \item \textbf{Joint training.} Concept and task losses are optimized simultaneously:
% \[
% \mathcal{L}_{\mathrm{joint}}
% =
% \mathcal{L}_{\mathrm{task}}(y,\hat{y})
% +
% \lambda_C \,
% \mathcal{L}_{\mathrm{concept}}(\mbf{c}_{\mathrm{known}}, \hat{\mbf{c}}_{\mathrm{known}}),
% \]
% where
% \[
% \mathcal{L}_{\mathrm{concept}}
% =
% \frac{1}{n}
% \sum_{i=1}^n
% \left\|
% \mbf{c}^{(i)}_{\mathrm{known}} - \hat{\mbf{c}}^{(i)}_{\mathrm{known}}
% \right\|_2^2,
% \]
% and $\mathcal{L}_{\mathrm{task}}$ is binary cross-entropy, and $\lambda_C$ balances concept prediction and task prediction.  
% Both the concept predictor and task head are updated jointly.
% \item \textbf{Sequential (two-stage) training.} 
% Stage 1: train the concept predictor using
% \[
% \mathcal{L}_{\mathrm{concept}}(\mbf{c}_{\mathrm{known}}, \hat{\mbf{c}}_{\mathrm{known}}).
% \]

% Stage 2: freeze the concept predictor and train the task head $f$using predicted concepts
% \[
% \mathcal{L}_{\mathrm{task}}(y, f(\hat{\mbf{c}}_{\mathrm{known}})).
% \]
% \item \textbf{Independent training.} 
% The concept predictor is trained as above using
% \[
% \mathcal{L}_{\mathrm{concept}}(\mbf{c}_{\mathrm{known}}, \hat{\mbf{c}}_{\mathrm{known}}).
% \]

% The task head is then trained on ground-truth concepts:
% \[
% \mathcal{L}_{\mathrm{task}}(y, f(\mbf{c}_{\mathrm{known}})).
% \]

% At test time, predictions are composed as
% $
% \hat{y} = f(\hat{\mbf{c}}_{\mathrm{known}}).
% $
% This regime prevents task gradients from influencing the learned concept representations.
% \end{itemize}

\subsection{Proof of Theorem 1}

Here we prove the following theorem:
\begin{theorem*}[Closed-form Fisher metric for one-hidden-layer binary model]
Consider the black-box model defined above with sigmoid output. Then the Fisher information metric on activation space admits the closed-form expression
\[
F_A(\mbf{a}) = p(\mbf{a})(1-p(\mbf{a}))\, g(\mbf{a}) g(\mbf{a})^\top,
\]
where $p(\mbf{a})\equiv p(y=1|\mbf{a})=\sigma(l(\mbf{a}))$ and 
\[
g(\mbf{a}) = \nabla_a l(\mbf{a}).
\]
\end{theorem*}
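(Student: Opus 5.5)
The plan is to compute the score function of the Bernoulli likelihood directly and then take its expectation over $y\sim p(y\mid\mbf{a})$. Write the model as $p(y\mid\mbf{a}) = p(\mbf{a})^{y}(1-p(\mbf{a}))^{1-y}$ with $y\in\{0,1\}$ and $p(\mbf{a})=\sigma(l(\mbf{a}))$. Then
\[
\log p(y\mid\mbf{a}) = y\log\sigma(l(\mbf{a})) + (1-y)\log\bigl(1-\sigma(l(\mbf{a}))\bigr).
\]
By the chain rule, $\nabla_{\mbf{a}}\log p(y\mid\mbf{a})$ is a scalar derivative with respect to $l$ multiplied by $g(\mbf{a})=\nabla_{\mbf{a}} l(\mbf{a})$.

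For the scalar factor we use the sigmoid identities $\frac{d}{dl}\log\sigma(l) = 1-\sigma(l)$ and $\frac{d}{dl}\log(1-\sigma(l)) = -\sigma(l)$. These give
\[
\nabla_{\mbf{a}}\log p(y\mid\mbf{a}) = \bigl(y - p(\mbf{a})\bigr)\,g(\mbf{a}).
\]
The outer product is therefore $(y-p(\mbf{a}))^2\, g(\mbf{a})g(\mbf{a})^\top$. Since $g(\mbf{a})$ does not depend on $y$, the expectation acts only on the scalar factor. We have $\mathbb{E}_{y\sim p(y\mid\mbf{a})}[(y-p)^2] = \mathrm{Var}(\text{Bernoulli}(p)) = p(1-p)$, and this yields the claimed formula $F_A(\mbf{a}) = p(\mbf{a})(1-p(\mbf{a}))\,g(\mbf{a})g(\mbf{a})^\top$.

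Finally we make $g$ explicit from the architecture in the Methods, $l(\mbf{a}) = \mbf{w}_l^\top \mathrm{ReLU}(\mbf{W}_H\mbf{a}+b_H)+b_l$. Differentiating gives
\[
g(\mbf{a}) = \mbf{W}_H^\top\bigl(\mbf{w}_l \odot \mathds{1}\{\mbf{W}_H\mbf{a}+b_H>0\}\bigr),
\]
where the indicator is applied componentwise.

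The only genuine subtlety is that ReLU is not differentiable on the hyperplanes $\{\mbf{W}_H\mbf{a}+b_H\}_i=0$. I would handle this by noting that these hyperplanes form a Lebesgue-null set. Off this set, $l$ is locally affine and the computation above holds exactly. On the set itself we adopt the standard convention $\mathrm{ReLU}'(0)=0$, which is what the indicator encodes. With that caveat, the remaining steps are routine chain-rule calculations, and the rank-one structure follows immediately from the outer-product form.
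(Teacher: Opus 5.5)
Your proposal is correct and follows essentially the same route as the paper. Both arguments use the chain rule to get the score $(y-p(\mbf{a}))\,g(\mbf{a})$, take the Bernoulli variance $p(1-p)$ for the expectation, and then write the explicit ReLU gradient $g(\mbf{a})=\mbf{W}_H^\top(m(\mbf{a})\odot\mbf{w}_l)$. Your remark on the Lebesgue-null set of ReLU kinks adds a small point of rigour that the paper leaves implicit in its indicator convention.
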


\begin{proof}
Consider how the black box model is trained, we have an input $\mbf{a} \in \mathbb{R}^r$ which is transformed by a hidden layer according to a function $H: \mathbb{R}^r \rightarrow \mathbb{R}^h$, defined:
\begin{equation}
    H(\mbf{a})=\text{ReLU}(\mbf{W}_H\mbf{a}+b_H),
\end{equation}
where $\mbf{W}_H \in \mathbb{R}^{h \times r}$ is a learned matrix of weights and $b_H \in \mathbb{R}^h$ is a learned vector of biases.

This layer is then transformed to a logit via
\begin{equation}
    l(\mbf{a})=\mbf{w}_l^{\top}H(\mbf{a}) + b_l,
\end{equation}
where $\mbf{w}_l \in \mathbb{R}^h$ is a learned vector of weights and $b_l \in \mathbb{R}$ is a learned bias.

This is then converted into a probability via a sigmoid:
\begin{align}
    p(y=1|\mbf{a})&=\sigma(l(\mbf{a}))\\
    p(y=0|\mbf{a})&=1-\sigma(l(\mbf{a})).
\end{align}

We want to calculate:
\begin{equation}
    F_A(\mbf{a})= \mathbb{E}_{y\sim p(y|\mbf{a})}\bigg[\nabla_\mbf{a}\log p(y|\mbf{a})\nabla_\mbf{a} \log p(y|\mbf{a})^\top\bigg]
\end{equation}
Since the model defines a Bernoulli distribution over $y \in \{0,1\}$, we have:
\begin{equation}
    \log p(y|\mbf{a}) = y\log \sigma(l(\mbf{a})) + (1-y)\log (1-\sigma(l(\mbf{a}))),
\end{equation}
hence, using the identity for sigmoid functions $\sigma'(l(\mbf{a}))=\sigma(l(\mbf{a}))(1-\sigma(l(\mbf{a})))$
\begin{equation}
    \nabla_{\mbf{a}}\log p(y|\mbf{a}) = (y-\sigma(l(\mbf{a})))\nabla_{\mbf{a}} l(\mbf{a}).
\end{equation}
Taking expectations over $y$:
\begin{align}
    F_A(\mbf{a})&= \mathbb{E}[((y-\sigma(l(\mbf{a})))^2]\nabla_\mbf{a} l(\mbf{a}) \nabla_\mbf{a} l(\mbf{a})^\top\\
    &=\sigma(l(\mbf{a}))(1-\sigma(l(\mbf{a})))\nabla_\mbf{a} l(\mbf{a}) \nabla_\mbf{a} l(\mbf{a})^\top
\end{align}
which follows as $\mathbb{E}[((y-\sigma(l(\mbf{a})))^2]=\sigma(l(\mbf{a}))(1-\sigma(l(\mbf{a})))$ is the variance of the Bernoulli distribution $\text{Ber}(\sigma(l(\mbf{a})))$.

Recall
\begin{equation}
    l(\mbf{a})=\mbf{w}_l^{\top}\text{ReLU}(\mbf{W}_H\mbf{a}+b_H)+b_l,
\end{equation}
noting that $\frac{\partial \text{ReLU}(\mbf{z})}{\partial z_j}= \mathds{1} \{z_j>0\}$, and defining
\begin{equation}
    m_i(\mbf{a})= \mathds{1}\{(\mbf{W}_H \mbf{a} + b_H)_i >0\},
\end{equation}
we deduce
\begin{align}
    \frac{\partial l(\mbf{a})}{\partial a_i} = \sum_{j=1}^{h} (\mbf{w}_l)_jm_j(\mbf{a})(\mbf{W}_H)_{ji}
\end{align}
or
\begin{equation}
    \nabla_\mbf{a} l(\mbf{a}) = \mbf{W}_H^\top(m(\mbf{a}) \odot \mbf{w}_l),
\end{equation}
where $(\mbf{a} \odot \mbf{b})_j = a_jb_j$ is the Hadamard product.

Thus,
\begin{equation}
    F_A(\mbf{a})=p(\mbf{a})(1-p(\mbf{a}))g(\mbf{a})g(\mbf{a})^\top
\end{equation}
where $p(\mbf{a})=\sigma(l(\mbf{a}))$ and $g(\mbf{a})=\mbf{W}_H^\top(m(\mbf{a}) \odot \mbf{w}_l)$.
\end{proof}

\subsection{Globe treasure hunter data generation and training protocol}

\paragraph{Geometric settings.}
We consider two geometric settings: a cylindrical (``flatworld'') geometry and a spherical (``round-world'') geometry. In both settings, points $\mbf{p} \in \mathbb{R}^3$ represent possible treasure locations.

\medskip
\noindent
\textbf{Spherical geometry.}
Let $\mathbb{S}^2 = \{\mbf{u} \in \mathbb{R}^3 : \|\mbf{u}\|_2 = 1\}$ denote the unit sphere. To generate a point:

\begin{enumerate}
    \item Sample a surface direction $\mbf{u} \sim \mathrm{Unif}(\mathbb{S}^2)$.
    \item Sample a radial depth $d \sim \mathrm{Unif}([0,1))$.
    \item Define the final location as
    \[
    \mbf{p} = (1-d)\mbf{u}.
    \]
\end{enumerate}

Thus $d=0$ corresponds to the surface of the unit sphere, and increasing $d$ moves radially inward to a sphere of radius $1-d$.

\medskip
\noindent
\textbf{Cylindrical geometry.}
Let $\mathbb{D}^2 = \{(x,y)\in\mathbb{R}^2 : x^2+y^2 \leq 1\}$ denote the unit disk. To generate a point:

\begin{enumerate}
    \item Sample a surface location $(x_u,y_u) \sim \mathrm{Unif}(\mathbb{D}^2)$.
    \item Sample a vertical depth $d \sim \mathrm{Unif}([0,1))$.
    \item Define
    \[
    \mbf{p} = (x_u, y_u, -d).
    \]
\end{enumerate}

In this setting, the Earth's surface corresponds to the plane $z=0$, and increasing $d$ translates orthogonally away from the plane without altering its intrinsic disk geometry. The North and South poles correspond to the loci $y=1$ and $y=-1$ respectively.

\paragraph{Concept definitions.}

In both geometries we define three concepts. Let $\rho = 1-d$ denote the radius of the spherical cross-section at depth $d$.

\medskip
\noindent
\textbf{Supervised concepts $(C_1,C_2)$.}

We define $C_1$ and $C_2$ as surface-parallel distances to the North and South poles.

\begin{itemize}
    \item In the spherical geometry, these are geodesic distances on the sphere of radius $\rho=1-d$:
    \begin{align}
        C_1(\mbf{p}) &= \rho \cdot \arccos\!\left(\frac{\mbf{p} \cdot (0,\rho,0)}{\rho^2}\right), \\
        C_2(\mbf{p}) &= \rho \cdot \arccos\!\left(\frac{\mbf{p} \cdot (0,-\rho,0)}{\rho^2}\right).
    \end{align}
    Since $\mbf{p} = \rho \mbf{u}$ with $\mbf{u} \in \mathbb{S}^2$, these simplify to
    \[
    C_1(\mbf{p}) = \rho \arccos(u_y),
    \qquad
    C_2(\mbf{p}) = \rho \arccos(-u_y).
    \]

    \item In the cylindrical geometry, these are intrinsic (planar) distances within the disk at fixed depth:
    \begin{align}
        C_1(\mbf{p}) &= \sqrt{x^2 + (y-1)^2}, \\
        C_2(\mbf{p}) &= \sqrt{x^2 + (y+1)^2}.
    \end{align}
\end{itemize}

\medskip
\noindent
\textbf{Unsupervised concept.}
\begin{align}
    C_3(\mbf{p}) = d \quad \text{(depth)}.
\end{align}

Only $(C_1,C_2)$ are treated as supervised concepts. Depth $C_3$ is unobserved by the CBM.

\paragraph{Concept-to-activation mapping.}

For each location $\mbf{p}$, we define the concept vector
\[
\mbf{c}^{(\mbf{p})} = (C_1(\mbf{p}), C_2(\mbf{p}), C_3(\mbf{p}))^\top \in \mathbb{R}^3.
\]

Activations are generated via a linear–Gaussian model:
\begin{equation}
    \mbf{a}^{(\mbf{p})} = \bsym{\Phi} \mbf{c}^{(\mbf{p})} + \bsym{\epsilon},
\end{equation}
where:
\begin{itemize}
    \item $\bsym{\Phi} \in \mathbb{R}^{r \times 3}$ has entries independently sampled from $\mathcal{N}(0,1)$,
    \item $\bsym{\epsilon} \sim \mathcal{N}(0,\sigma_a^2 \mbf{I}_r)$,
    \item $r=64$ denotes the activation dimension.
\end{itemize}

Thus the activation space is a noisy linear embedding of the three underlying concepts.

\paragraph{Treasure hunter task.}

We fix a reference point on the equator,
\[
\mbf{e} = (1,0,0),
\]
and define the binary classification task
\begin{equation}
    Y(\mbf{p}) = \mathds{1}\{\|\mbf{p} - \mbf{e}\|_2 < R\},
\end{equation}
with $R=0.75$.

\paragraph{Dataset construction.}

For each geometry, we generate $n=8000$ samples. Data are  split into training and test sets in a 75:25 ratio. All reported results are averaged over 50 independent repetitions, each with independently sampled $\bsym{\Phi}$ and random train/test splits.

\paragraph{Model training.}

We train:

\begin{itemize}
    \item A one-hidden-layer black-box model with width $h=128$ using binary cross-entropy loss.
    \item A sparse autoencoder with $k_{\mathrm{SAE}}=60$ latent units and sparsity coefficient $\lambda_{\mathrm{SAE}}=10^{-3}$.
    \item A jointly trained CBM with concept loss weight $\lambda_C=1$.
\end{itemize}

All models are trained using the Adam optimiser \cite{kingma2017adammethodstochasticoptimization} with batch size $512$. 
The black-box model and CBM are trained with learning rate $10^{-3}$ for $30$ epochs. The SAE is trained with learning rate $2\times10^{-3}$ for 
$60$ epochs, as unsupervised feature discovery typically requires longer training.

\paragraph{Fisher geometry.}

The Fisher metric is computed using the closed-form expression derived in Theorem~1. To obtain a stable task-aligned quadratic form, we average over activations satisfying
\[
p(\mbf{a}) \in (p_{\mathrm{low}}, p_{\mathrm{high}}) \footnote{Here, $p$ represents probability, not the location vector $\mbf{p}$.},
\]
yielding $\bar{F}_A$ as defined in Equation~\eqref{ave_Fisher}.

We examined several intervals for $(p_{\mathrm{low}},p_{\mathrm{high}})\in \{(0,1),(0.1,0.9),(0.2,0.8),(0.3,0.7),(0.4,0.6)\}$ and investigated their impact on the difference between $\gamma_{F_A}$ in the cylindrical vs the spherical geometries (Figure \ref{fig:interval}). We found that the results were qualitatively comparable across different choices of interval, but quantitatively strongest   $(p_{\mathrm{low}},p_{\mathrm{high}})=(0.2,0.8)$, so we selected this interval for all experiments in the manuscript for consistency.

\begin{figure}[H]
    \centering
    \includegraphics[width=\linewidth]{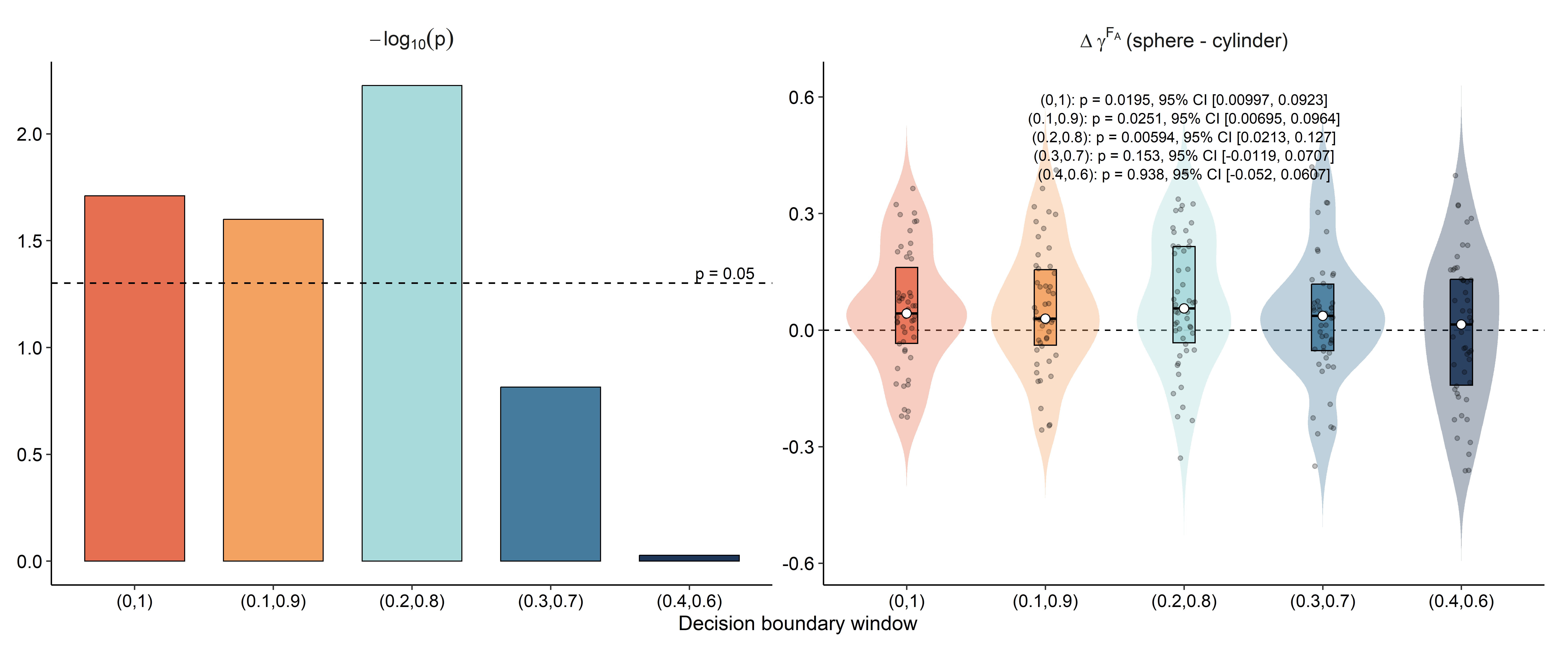}
    \vspace{-0.5em}
    \caption{\textbf{Effect of decision-boundary window size on frustration detection.} Left: Bar plot shows $-\log_{10}(p)$ for the corresponding paired Wilcoxon tests comparing $\gamma^{F_A}$ between spherical and cylindrical geometries. Right: Violin and box plots show the paired differences $\Delta \gamma^{F_A} = \gamma^{F_A}_{\mathrm{sphere}} - \gamma^{F_A}_{\mathrm{cylinder}}$ across 50 simulations for different decision-boundary windows $(p_{\mathrm{low}}, p_{\mathrm{high}})$ used to compute the Fisher metric. Across all boundary windows, $\gamma^{F_A}$ is consistently higher in the spherical geometry than in the cylindrical geometry, indicating robust detection of concept frustration. The effect is quantitatively strongest for the intermediate window $(p_{\mathrm{low}},p_{\mathrm{high}})=(0.2,0.8)$.}
    \label{fig:interval}
    \vspace{-0.5em}
\end{figure}

\paragraph{Statistical testing.}

For each metric, comparisons between cylindrical and spherical geometries are performed using a Wilcoxon signed-rank test across 50 repetitions.

\subsection{Synthetic data generation and and construction of the structured interaction matrix $\mbf{M}(\alpha)$}

In Section \ref{subsec_data_model}, we generated synthetic data from a linear-Gaussian concept model with controllable interactions between supervised and unsupervised concepts. We denote the full concept vector by
\[
\bsym{\chi}=(\bsym{\chi}_k,\bsym{\chi}_u)\in\mathbb{R}^k,
\]
where $\bsym{\chi}_k\in\mathbb{R}^{k_{\mathrm{known}}}$ contains the supervised (known) concepts and $\bsym{\chi}_u\in\mathbb{R}^{k_{\mathrm{unknown}}}$ the unsupervised (unknown) concepts, with $k=k_{\mathrm{known}}+k_{\mathrm{unknown}}$.

We first sampled a fixed symmetric positive definite covariance matrix
\[
\mbf{B}_{\mathrm{known}}\in\mathbb{R}^{k_{\mathrm{known}}\times k_{\mathrm{known}}},
\]
which defines the covariance structure among known concepts, and a fixed symmetric positive definite base covariance
\[
\mbf{B}_{\mathrm{temp}}\in\mathbb{R}^{k_{\mathrm{unknown}}\times k_{\mathrm{unknown}}},
\]
which captures the component of unknown-concept variation intrinsic to the unknown concepts. In the notation of the Results, $\mbf{B}_{kk}\equiv \mbf{B}_{\mathrm{known}}$.

To control semantic consistency versus frustration between known and unknown concepts, we constructed an $\alpha$-dependent interaction matrix
\[
\mbf{M}(\alpha)\in\mathbb{R}^{k_{\mathrm{known}}\times k_{\mathrm{unknown}}},
\]
where the sign of $\alpha$ selects whether unknown concepts reinforce ($\alpha<0$) or oppose ($\alpha>0$) the correlation structure among known concepts, and $|\alpha|$ controls interaction strength.

Let
\[
b_{ij}=(\mbf{B}_{\mathrm{known}})_{ij},
\qquad 1\le i<j\le k_{\mathrm{known}}.
\]
Each unknown concept index $u\in\{1,\dots,k_{\mathrm{unknown}}\}$ was assigned to mediate one selected known pair $(i(u),j(u))$. This assignment rule was fixed across all values of $\alpha$, and all remaining entries of the corresponding column were set to zero. For each such $u$, we set two nonzero entries in column $u$ of $\mbf{M}(\alpha)$ as follows.

\[
M(\alpha)_{i(u),\,u} = b_{i(u)j(u)}|\alpha|, \qquad 
M(\alpha)_{j(u),\,u} = -\,\mathrm{sign}(\alpha)\cdot|b_{i(u)j(u)}|\cdot|\alpha|.
\]

\iffalse
For $\alpha>0$ (frustrating regime), the two nonzero entries in column $u$ of $\mbf{M}(\alpha)$ were defined by
\[
M(\alpha)_{i(u),u}=b_{i(u)j(u)}|\alpha|,
\]
together with
\[
M(\alpha)_{j(u),u}=
\begin{cases}
-b_{i(u)j(u)}|\alpha|, & b_{i(u)j(u)}>0,\\[4pt]
\phantom{-}b_{i(u)j(u)}|\alpha|, & b_{i(u)j(u)}<0.
\end{cases}
\]

For $\alpha<0$ (consistent regime), we instead set
\[
M(\alpha)_{i(u),u}=b_{i(u)j(u)}|\alpha|,
\]
and
\[
M(\alpha)_{j(u),u}=
\begin{cases}
\phantom{-}b_{i(u)j(u)}|\alpha|, & b_{i(u)j(u)}>0,\\[4pt]
-b_{i(u)j(u)}|\alpha|, & b_{i(u)j(u)}<0.
\end{cases}
\]
\fi

At $\alpha=0$, this construction gives $\mbf{M}(0)=0$, so known and unknown concepts are independent. In the notation of the Results, we identify
\[
\mbf{B}_{ku}(\alpha)\equiv \mbf{M}(\alpha),
\qquad
\mbf{B}_{uk}(\alpha)\equiv \mbf{M}(\alpha)^\top.
\]

Given $\mbf{M}(\alpha)$, we defined the unknown-concept covariance as
\[
\mbf{B}_{uu}(\alpha)
=
\mbf{B}_{\mathrm{temp}}
+
\mbf{M}(\alpha)^\top
\mbf{B}_{\mathrm{known}}^{-1}
\mbf{M}(\alpha),
\]
and assembled the full concept covariance matrix as
\begin{equation}
\mbf{B}(\alpha)=
\begin{pmatrix}
\mbf{B}_{\mathrm{known}} & \mbf{M}(\alpha)\\
\mbf{M}(\alpha)^\top & \mbf{B}_{\mathrm{temp}}+\mbf{M}(\alpha)^\top \mbf{B}_{\mathrm{known}}^{-1}\mbf{M}(\alpha)
\end{pmatrix}.
\label{eq:toy_covariance_alpha_dependent}
\end{equation}
By Schur complement, $\mbf{B}(\alpha)$ is symmetric positive definite for all $\alpha$. Moreover,
\[
\mathrm{Cov}(\bsym{\chi}_k)=\mbf{B}_{\mathrm{known}}
\]
is preserved across regimes, so varying $\alpha$ changes only the coupling between known and unknown concepts.

We then generated $n$ independent concept samples according to
\[
\bsym{\chi}^{(i)}\sim\mathcal{N}(0,\mbf{B}(\alpha)),
\qquad i=1,\dots,n.
\]

Ground-truth task labels were generated from concepts through a noisy linear map,
\[
\mbf{a}^{(i)}=\bsym{\Phi}\bsym{\chi}^{(i)}+\bsym{\epsilon}^{(i)},
\]
where $\bsym{\Phi}\in\mathbb{R}^{r\times k}$ has entries independently sampled from $\mathcal{N}(0,1)$ and
\[
\bsym{\epsilon}^{(i)}\sim\mathcal{N}(0,\sigma_a^2\,\mbf{I}_r).
\]
The parameter $\sigma_a^2$ therefore controls the difficulty of recovering concepts from activations.

To construct the task, we first sampled a base task vector
\[
\bsym{\psi}^\ast\sim\mathcal{N}(0,\mbf{I}_k),
\]
and defined
\[
\bsym{\psi}
=
\big(
(1-\omega)\bsym{\psi}^\ast_{1:k_{\mathrm{known}}},
\;
\omega\,\bsym{\psi}^\ast_{k_{\mathrm{known}}+1:k}
\big),
\]
with $\omega\in[0,1]$. Writing $\bsym{\psi}=(\bsym{\psi}_k,\bsym{\psi}_u)$, the parameter $\omega$ controls the extent to which the task depends on known versus unknown concepts: $\omega=0$ yields a task depending only on known concepts, whereas $\omega=1$ yields a task depending only on unknown concepts.

Finally, we defined the latent task score
\[
\tau^{(i)}=\bsym{\psi}_k^\top \bsym{\chi}_k^{(i)}+\bsym{\psi}_u^\top \bsym{\chi}_u^{(i)}+\eta^{(i)},
\qquad
\eta^{(i)}\sim\mathcal{N}(0,\sigma_y^2),
\]
and generated binary labels as
\[
y^{(i)}=\mathds{1}\{\tau^{(i)}>0\}.
\]
The parameter $\sigma_y^2$ controls the intrinsic noise of the task and therefore the difficulty of predicting $y$ even from the full concept set.

\subsection{Proof of Theorem 2}

Here we prove the following theorem:

\begin{theorem*}[Concept-optimal Accuracy in the Linear–Gaussian Model]
\label{thm:optimal_cbm_accuracy}

Let the concept vector be denoted $\bsym{\chi}=(\bsym{\chi}_k,\bsym{\chi}_u)$ where $\bsym{\chi}_{k}=(c_1,\cdots,c_{k_{\mathrm{known}}})$ denotes known concepts and $\bsym{\chi}_{u}=(c_{k_{\mathrm{known}}+1},\cdots,c_{k})$ unknown concepts, under our model $\bsym{\chi} = (\bsym{\chi}_k,\bsym{\chi}_u) \sim \mathcal{N}(0,\mbf{B})$ with block covariance
\[
\mbf{B} =
\begin{pmatrix}
\mbf{B}_{kk} & \mbf{B}_{ku} \\
\mbf{B}_{uk} & \mbf{B}_{uu}
\end{pmatrix},
\]

where
\[
\mbf{B}_{uu}(\alpha)
=
\mbf{B}_{temp} + \mbf{B}_{uk}(\alpha) \mbf{B}_{kk}^{-1} \mbf{B}_{ku}(\alpha),
\]
and $\mbf{B}_{kk}$ is positive definite (and thus invertible).

Let the weight vector be denoted $\bsym{\psi} =(\bsym{\psi}_k,\bsym{\psi}_u)$ where $\bsym{\psi}_{k}=(w_1,\cdots,w_{k_{\mathrm{known}}})$ denotes known concept-task weights and $\bsym{\psi}_{u}=(w_{k_{\mathrm{known}}+1},\cdots,w_{k})$ unknown concept-task weights.

Let the task variable be
\[
\tau = \bsym{\psi}_k^\top \bsym{\chi}_k + \bsym{\psi}_u^\top \bsym{\chi}_u + \eta,
\qquad
\eta \sim \mathcal{N}(0,\sigma_y^2),
\]
and define the binary label
\[
y = \mathds{1}\{\tau > 0\}.
\]

Then the  concept-optimal Bayes classifier using only the known concepts $\bsym{\chi}_k$,
which minimises the risk
\[ R( \hat y) = \mathbb{E}_{(\bsym{\chi}_k,y)} \left[\mathds{1}\{ \hat{y}(\bsym{\chi}_k) \neq y \} \right],\]
is 
\[
\hat{y}(\bsym{\chi}_k)
=
\mathds{1}\{\bsym{\phi}(\alpha)^\top \bsym{\chi}_k > 0\},
\]
and its classification accuracy is
\[
\mathrm{Acc}_{\textrm{CBM}}(\alpha)
=
\frac{1}{2}
+
\frac{1}{\pi}
\arctan
\left(
\sqrt{
\frac{
\bsym{\phi}(\alpha)^\top \mbf{B}_{kk} \bsym{\phi}(\alpha)
}{
\bsym{\psi}_u^\top \mbf{B}_{temp} \bsym{\psi}_u + \sigma_y^2
}
}
\right),
\]
where
\[
\bsym{\phi}(\alpha)
=
\bsym{\psi}_k + \mbf{B}_{kk}^{-1} \mbf{B}_{ku}(\alpha) \bsym{\psi}_u,
\]

\end{theorem*}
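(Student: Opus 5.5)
The plan is to condition on the known concepts and reduce everything to a bivariate Gaussian computation. Since $(\bsym{\chi}_k,\bsym{\chi}_u,\eta)$ is jointly Gaussian with $\eta$ independent, the conditional law of $\bsym{\chi}_u$ given $\bsym{\chi}_k$ is Gaussian, with mean $\mbf{B}_{uk}\mbf{B}_{kk}^{-1}\bsym{\chi}_k$ and covariance equal to the Schur complement $\mbf{B}_{uu}-\mbf{B}_{uk}\mbf{B}_{kk}^{-1}\mbf{B}_{ku}$. By the specific construction of $\mbf{B}_{uu}(\alpha)$, this Schur complement is exactly $\mbf{B}_{temp}$.

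Hence $\tau\mid\bsym{\chi}_k\sim\mathcal{N}(\mu(\bsym{\chi}_k),s^2)$. Here
\[
\mu(\bsym{\chi}_k)=\bsym{\psi}_k^\top\bsym{\chi}_k+\bsym{\psi}_u^\top\mbf{B}_{uk}\mbf{B}_{kk}^{-1}\bsym{\chi}_k=\bsym{\phi}(\alpha)^\top\bsym{\chi}_k,
\]
where I use the symmetry $\mbf{B}_{uk}=\mbf{B}_{ku}^\top$ and $\mbf{B}_{kk}^{-1}$ symmetric to rewrite $\mbf{B}_{uk}$ as $\mbf{B}_{ku}^\top$. The variance is $s^2=\bsym{\psi}_u^\top\mbf{B}_{temp}\bsym{\psi}_u+\sigma_y^2$, and it does not depend on $\bsym{\chi}_k$.

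Next I will identify the Bayes classifier. Under 0–1 loss it predicts $1$ exactly when $P(y=1\mid\bsym{\chi}_k)=\Phi(\mu/s)>1/2$, i.e.\ when $\bsym{\phi}^\top\bsym{\chi}_k>0$. Ties have measure zero, or $\bsym{\phi}=0$ gives accuracy $1/2$, which is consistent with the formula. This yields the stated $\hat y$.

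For the accuracy, I will write $U=\bsym{\phi}^\top\bsym{\chi}_k\sim\mathcal{N}(0,v)$ with $v=\bsym{\phi}^\top\mbf{B}_{kk}\bsym{\phi}$. I then write $\tau=U+E$, where $E=\tau-\mu$ is $\mathcal{N}(0,s^2)$ and independent of $\bsym{\chi}_k$, hence of $U$; this uses joint Gaussianity and zero conditional correlation. The accuracy is $P(\mathrm{sign}\,U=\mathrm{sign}(U+E))$.

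The main step is the orthant probability for the bivariate Gaussian $(U,\,U+E)$. Its correlation is
\[
\rho=\frac{v}{\sqrt{v(v+s^2)}}=\sqrt{\frac{v}{v+s^2}},
\]
and Sheppard's formula gives $P(\text{same sign})=\tfrac12+\tfrac{1}{\pi}\arcsin\rho$. To finish, I will convert with the identity $\arcsin\bigl(\sqrt{v/(v+s^2)}\bigr)=\arctan\bigl(\sqrt{v/s^2}\bigr)$, which follows from a right triangle with legs $\sqrt v$ and $s$. This gives the stated formula.

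I expect this orthant computation to be the only nonroutine piece. If quoting Sheppard's formula is undesirable, I will prove it directly. Standardize $U=\sqrt v Z_1$ and $E=sZ_2$ with $Z_1,Z_2$ independent standard normals. By rotational symmetry of $(Z_1,Z_2)$, the probability that $\sqrt vZ_1$ and $\sqrt vZ_1+sZ_2$ have opposite signs is the angular fraction $\tfrac{1}{\pi}\arctan(s/\sqrt v)$. Subtracting from one and using $\arctan x+\arctan(1/x)=\pi/2$ gives the same result.
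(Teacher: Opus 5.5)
Your proof is correct and follows essentially the same route as the paper. Both arguments obtain $\tau\mid\bsym{\chi}_k\sim\mathcal{N}(\bsym{\phi}(\alpha)^\top\bsym{\chi}_k,\ \bsym{\psi}_u^\top\mbf{B}_{temp}\bsym{\psi}_u+\sigma_y^2)$ via the Schur complement, threshold the posterior at $1/2$, split $\tau$ into $S=\bsym{\phi}^\top\bsym{\chi}_k$ plus an independent Gaussian residual, and then apply Sheppard's orthant formula and convert $\arcsin$ to $\arctan$. Your rotational-symmetry derivation of the orthant probability supplies the ``simple integration'' the paper leaves implicit, and your remarks on ties and the $\bsym{\phi}=0$ case are harmless extras.
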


\begin{proof}
    To compute the optimal classifier given known concepts, we must calculate the conditional distribution $\tau|\bsym{\chi}_k$.

We note that as our toy model is linear-Gaussian this conditional distribution will also be Gaussian and can thus be explicitly calculated via $\mathbb{E}[\tau|\bsym{\chi}_k]$ and $\mathrm{Var}(\tau|\bsym{\chi}_k)$. 

We first note that
\begin{align*}
    \mathbb{E}[\tau|\bsym{\chi}_k]&=\mathbb{E}[\bsym{\psi}_k^\top\bsym{\chi}_k+\bsym{\psi}_u^\top\bsym{\chi}_u+\eta|\bsym{\chi}_k]\\
    &=\bsym{\psi}_k^\top\bsym{\chi}_k+\bsym{\psi}_u^\top\mathbb{E}[\bsym{\chi}_u|\bsym{\chi}_k]\\\
    &=\bsym{\psi}_k^\top\bsym{\chi}_k + \bsym{\psi}_u^\top \mbf{B}_{uk}\mbf{B}^{-1}_{kk}\bsym{\chi}_k\\
    &=(\bsym{\psi}_k+\mbf{B}^{-1}_{kk}\mbf{B}_{ku}\bsym{\psi}_u)^{\top}\bsym{\chi}_k\\
    &=\bsym{\phi}^\top\bsym{\chi}_k.
\end{align*}
Which follows from linearity of expectation, the fact that $\eta\sim \mathcal{N}(0,\sigma^2_y)$, the fact that $\bsym{\chi}\sim\mathcal{N}(0,\mbf{B})$ and the conditional distribution of multivariate Gaussian distributions, and where we have defined $\bsym{\phi}=\bsym{\psi}_k+\mbf{B}^{-1}_{kk}\mbf{B}_{ku}\bsym{\psi}_u$. We note that $\bsym{\phi}:=\bsym{\phi}(\alpha)$ is a function of our frustration parameter $\alpha$.

We now consider
\begin{align*}
    \text{Var}[\tau|\bsym{\chi}_k]&=\text{Var}[\bsym{\psi}_k^\top\bsym{\chi}_k+\bsym{\psi}_u^\top\bsym{\chi}_u+\eta|\bsym{\chi}_k]\\
    &=\bsym{\psi}_u^{\top}\text{Var}[\bsym{\chi}_u|\bsym{\chi}_k]\bsym{\psi}_u+\sigma_y^2\\
    &=\bsym{\psi}_u^{\top}(\mbf{B}_{uu}-\mbf{B}_{uk}\mbf{B}^{-1}_{kk}\mbf{B}_{ku})\bsym{\psi}_u+\sigma_y^2\\
    &=\bsym{\psi}_u^{\top}\mbf{B}_{\text{temp}}\bsym{\psi}_u+\sigma_y^2.
\end{align*}
Which follows from the variance of a constant being zero the fact that $\eta\sim \mathcal{N}(0,\sigma^2_y)$ and is an independent variable, the fact that $\bsym{\chi}\sim\mathcal{N}(0, \mbf{B})$, the conditional distribution of multivariate Gaussian distributions, and the construction of $\mbf{B}$ which ensures $\mbf{B}_{\text{temp}}=\mbf{B}_{uu}-\mbf{B}_{uk}\mbf{B}^{-1}_{kk}\mbf{B}_{ku}$. We note that via our construction this variance is not dependent on $\alpha$, the frustration parameter.

Whence $\tau|\bsym{\chi}_k\sim \mathcal{N}(\bsym{\phi}(\alpha)^\top\bsym{\chi}_k,\bsym{\psi}_u^{\top}\mbf{B}_{\text{temp}}\bsym{\psi}_u+\sigma_y^2)$. 
By definition the concept-optimal Bayes predictor is the risk minimiser 
\[ R( \hat y) = \mathbb{E}_{(\mbf{c}_{\text{known}},y)} \left[\ell(\hat{y}(\mbf{c}_{\text{known}}),  y) \right],\]
with respect to the 0-1 loss.
Since $ \mathbb{E} \left[\ell(\hat{y}(\mbf{c}_{\text{known}}),  y)  \right] = 
 \mathbb{E} \left [ \mathbb{E} \left[\ell(\hat{y}(\mbf{c}_{\text{known}}),  y)  \mid \mbf{c}_{\text{known}}\right] \right ]$,
 which is minimised by setting
 \[ \hat y(\mbf{c}_{\text{known}}) = \mathrm{argmin}_{z \in \{0,1\}} \mathbb{E} \left[\ell(z,  y)  \mid \mbf{c}_{\text{known}}\right]. \]
 When $l(z,y) $ is the 0-1 loss, 
 we obtain as usual 
$  \hat y( \mbf{c}_{\text{known}})=\mathrm{argmax}_{z} \mathbb{P}(Y=z | \chi_k = \mbf{c}_{\text{known}})$.
Hence a concept-optimal model will predict $y=1$ 
iff  
$\mathbb{P}(Y=1 | \chi_k = \mbf{c}_{\text{known}}) > 1/2$,
and thus
will predict $y=1$ when $\mathbb{P}[\tau>0|\mbf{c}_{\text{known}}]>1/2$ and zero otherwise. 
It therefore follows that the optimal classifier given $\bsym{\chi}_k$ is simply:
\begin{equation}
    \hat{y}(\bsym{\chi}_k)= \mathds{1}\{\bsym{\phi}(\alpha)^\top\bsym{\chi}_k>0\}.
\end{equation}

As the true outcome is $y=\mathds{1}\{\tau>0\}$ We note that the optimal classifier makes an error when $\textrm{sign}(\bsym{\phi}(\alpha)^\top\bsym{\chi}_k) \not=\textrm{sign}(\tau)$.

Let us define $S:=\mathbb{E}[\tau |\bsym{\chi}_k]=\bsym{\phi}(\alpha)^\top\bsym{\chi}_k$ and $R:=\tau-S$. We note that as $\bsym{\chi}_k\sim\mathcal{N}(0,\mbf{B}_{kk})$, it follows that $S \sim \mathcal{N}(0,\bsym{\phi}(\alpha)^\top\mbf{B}_{kk}\bsym{\phi}(\alpha))$, we denote $\sigma_S^2=\bsym{\phi}(\alpha)^\top\mbf{B}_{kk}\bsym{\phi}(\alpha)$. We also note that
\begin{align*}
    R&=\tau-S\\
    &= \bsym{\psi}_k^\top\bsym{\chi}_k+\bsym{\psi}_u^\top\chi_u+\eta - (\bsym{\psi}_k+\mbf{B}^{-1}_{kk}\mbf{B}_{ku}\bsym{\psi}_u)^\top\bsym{\chi}_k\\
    &=\bsym{\psi}_u^\top(\bsym{\chi}_u-\mbf{B}_{uk}\mbf{B}_{kk}^{-1}\bsym{\chi}_k) + \eta.
\end{align*}
It is clear that $\mathbb{E}[R]=0$. We note that 
\begin{align*}
    \text{Var}[R]&=\mathbb{E}[(\bsym{\psi}_u^\top(\bsym{\chi}_u-\mbf{B}_{uk}\mbf{B}_{kk}^{-1}\bsym{\chi}_k))^{2}]+2\mathbb{E}[\eta]\mathbb{E}[\bsym{\psi}_u^\top(\bsym{\chi}_u-\mbf{B}_{uk}\mbf{B}_{kk}^{-1}\bsym{\chi}_k)]] + \mathbb{E}[\eta^2]\\
    &=\mathbb{E}[(\bsym{\psi}_u^\top(\bsym{\chi}_u-\mbf{B}_{uk}\mbf{B}_{kk}^{-1}\bsym{\chi}_k))^{2}] + \sigma_y^2\\
    &=\bsym{\psi}_u^\top\mathbb{E}[(\bsym{\chi}_u-\mbf{B}_{uk}\mbf{B}_{kk}^{-1}\bsym{\chi}_k)^{2}]\bsym{\psi}_u+\sigma_y^2\\
    &=\bsym{\psi}_u^\top\mathbb{E}[(\bsym{\chi}_u-\mathbb{E}[\bsym{\chi}_u|\bsym{\chi}_k])^{2}]\bsym{\psi}_u+\sigma_y^2\\
    &=\bsym{\psi}_u^\top\mathrm{Var}[\bsym{\chi}_u \mid \bsym{\chi}_k]\bsym{\psi}_u+\sigma_y^2\\
    &=\bsym{\psi}_u^\top(\mbf{B}_{uu}-\mbf{B}_{uk}B^{-1}_{kk}\mbf{B}_{ku})\bsym{\psi}_u+\sigma_y^2,
\end{align*}
which follows from the properties of a multivariate Gaussian. Thus $R\sim\mathcal{N}(0,\bsym{\psi}_u^\top(\mbf{B}_{uu}-\mbf{B}_{uk}\mbf{B}^{-1}_{kk}\mbf{B}_{ku})\bsym{\psi}_u+\sigma_y^2)$, we denote $\sigma_R^2=\bsym{\psi}_u^\top(\mbf{B}_{uu}-\mbf{B}_{uk}\mbf{B}^{-1}_{kk}\mbf{B}_{ku})\bsym{\psi}_u+\sigma_y^2$.

Thus the accuracy of the Bayes optimal classifier is $\mathbb{P}[\textrm{sign}(S)=\textrm{sign}(S+R)]$. We note that $S$ and $R$ are independent Gaussian variables (i.e., $\textrm{Cov}(R,S)=0$) and that $\textrm{Var}(S+R)=\sigma_R^2+\sigma_S^2$ and $\textrm{Cov}(S,S+R)=\sigma_S^2$, thus the correlation co-efficient between $S+R$ and $S$, is $\rho_{S,S+R}=\frac{\sigma_S}{\sqrt{\sigma_S^2+\sigma_R^2}}$. It can therefore be demonstrated through simple integration that the accuracy of the Bayes-optimal CBM given $\bsym{\chi}_k$ is given by:
\begin{align*}
    \mathbb{P}[\textrm{sign}(S)&=\textrm{sign}(S+R)] = \frac{1}{2}+\frac{1}{\pi}\arcsin \rho_{S,S+R}\\
    &=\frac{1}{2}+\frac{1}{\pi}\arcsin \Bigg(\frac{\sigma_S}{\sqrt{\sigma_S^2+\sigma_R^2}}\Bigg)\\
    &=\frac{1}{2}+\frac{1}{\pi}\arctan \Bigg(\frac{\sigma_S}{\sigma_R}\Bigg)\\
    &=\frac{1}{2}+\frac{1}{\pi}\arctan \Bigg(\sqrt{\frac{\bsym{\phi}(\alpha)^\top\mbf{B}_{kk}\bsym{\phi}(\alpha)}{\bsym{\psi}_u^\top(\mbf{B}_{uu}-\mbf{B}_{uk}\mbf{B}^{-1}_{kk}\mbf{B}_{ku})\bsym{\psi}_u+\sigma_y^2}}\Bigg)\\
    &=\frac{1}{2}+\frac{1}{\pi}\arctan \Bigg(\sqrt{\frac{\bsym{\phi}(\alpha)^\top\mbf{B}_{kk}\bsym{\phi}(\alpha)}{\bsym{\psi}_u^\top\mbf{B}_{temp}\bsym{\psi}_u+\sigma_y^2}}\Bigg)
\end{align*}

We denote this quantity as $\textrm{Acc}_{\textrm{CBM}}(\alpha)$.
\end{proof}

\subsection{Experiments with synthetic data}

We used the linear-Gaussian synthetic data generator defined above to study interactions between supervised (``known'') and unsupervised (``unknown'') concepts under controlled covariance structure.

For each simulation we generated:

\begin{align*}
n &= 8000, \\
k &= 50, \\
k_{\mathrm{known}} &\in \{10,20,30,40\}, \\
r &= 64, \\
\sigma_a &= 0.3, \\
\sigma_y &= 1.5.
\end{align*}

Concepts were sampled as
\[
\mbf{c} = (\mbf{c}_{\mathrm{known}}, \mbf{c}_{\mathrm{unknown}}) \sim \mathcal{N}(0, \mbf{B}(\alpha)),
\]
where $\alpha \in \{-1,0,1\}$ controls semantic alignment between known and unknown concepts:

\begin{itemize}
\item $\alpha=-1$: semantic alignment (consistent)
\item $\alpha=0$: independence
\item $\alpha=+1$: semantic opposition (frustration)
\end{itemize}

The outcome was generated as
\[
Y = \bsym{\psi}_k^\top \mbf{c}_{\mathrm{known}} + \omega \bsym{\psi}_u^\top \mbf{c}_{\mathrm{unknown}} + \varepsilon,
\]
where $\omega \in \{0,0.25,0.5,0.75,1\}$ controls the proportion of task signal arising from unknown concepts and $\varepsilon \sim \mathcal{N}(0,\sigma_y^2)$.

Observed features were generated via a linear projection
\[
X = \mbf{A}\mbf{c} + \eta,
\]
where $\eta \sim \mathcal{N}(0,\sigma_a^2 I)$.

Each of the 60 parameter regimes was simulated across 10 independent seeds (600 datasets total). Data were split 75:25 into train and test sets.

% \subsubsection{Evaluation metrics}

% \paragraph{Task accuracy.}
% Test-set classification accuracy was computed for all models on held out data. Bayes-optimal accuracy was computed analytically under the linear-Gaussian data-generating model.

% \paragraph{Concept prediction error.}
% Concept mean squared error (MSE) was computed over held-out data.

% \paragraph{Frustration metrics.}
% We computed both task-aligned Fisher frustration $\gamma^{F_A}$ and Euclidean frustration $\gamma^{E}$ as described in Results.

% \paragraph{Statistical analysis.} For seed-matched regimes, paired Wilcoxon signed-rank tests were used to compare $\alpha=-1$,\ $\alpha=+1$ and $\alpha=0$. Statistical significance was assessed at $p<0.05$.

\subsection{Real-world experiments}

\paragraph{Language experiment (sarcasm detection).}

\textit{Dataset and embeddings.}
We used the News Headlines sarcasm dataset (28,000 headlines; 13,000 labelled sarcastic) \cite{misra2023Sarcasm}. Each headline was embedded using DeBERTa-v3-base, producing a 768-dimensional representation $\mbf{a}$ \cite{he2021debertav3}.

\textit{Concept construction.}
We defined three continuous sentiment concepts derived from RoBERTa-base-sentiment logits \cite{barbieri-etal-2020-tweeteval}: $C_1$ (positive sentiment), $C_2$ (negative sentiment), and $C_3$ (sentiment intensity), defined as the negative neutrality logit. In this dataset, $C_1$ and $C_2$ are inversely correlated, while $C_3$ is positively correlated with both. A linear model confirmed that all three concepts are independently associated with sarcasm.

\textit{Models and training.}
Two CBMs were trained using 10-fold cross-validation: CBM1 with known concepts $(C_1, C_2)$ and CBM2 with known concepts $(C_1, C_2, C_3)$. Both models were trained for 30 epochs with learning rate $10^{-3}$ and evaluated using 10-fold cross-validation. A one-hidden-layer MLP was trained with width $h = 128$ for 60 epochs. We also trained an SAE with a learning rate of $2\times10^{-3}$ for 60 epochs with $k_{\mathrm{SAE}} = 300$ latent units and a sparsity coefficient  $\lambda_{\mathrm{SAE}}=10^{-3}$. All models were trained using the Adam optimiser with a batch size of 512.

\textit{Evaluation.}
We computed task accuracy, Euclidean frustration $\textrm{Frust}^E(1,2)$, and Fisher frustration $\textrm{Frust}^{F_A}(1,2)$. Frustration was evaluated on the shared concept subspace $(C_1, C_2)$. Paired Wilcoxon signed-rank tests were used across folds.

\vspace{0.5em}

\paragraph{Vision experiment (CUB gull--tern classification).}

\textit{Dataset and embeddings.}
We used the CUB-200-2011 dataset \cite{wah2011caltech}, restricting to gull and tern species (887 images total). Images were embedded using the pretrained CLIP ViT-B/16 encoder, yielding a 512-dimensional representation $\mbf{a}$ \cite{bhalla2024interpreting}.

\textit{Concept construction.}
We selected three visual concepts from the CUB attribute vocabulary: white upperparts ($C_1$), gull-like shape ($C_2$), and solid wing pattern ($C_3$). Within the gull–tern subset, $C_1$ and $C_2$ are positively correlated, while $C_3$ is positively correlated with $C_1$ and negatively correlated with $C_2$, forming a frustrating triple. All three concepts were independently associated with the classification task.

\textit{Models and training.}
Two CBMs were trained using 10-fold cross-validation: CBM1 with known concepts $(C_1, C_2)$ and CBM2 with known concepts $(C_1, C_2, C_3)$.  Both CBMs were trained for 30 epochs with a learning rate of $10^{-3}$ using joint optimisation of concept reconstruction loss  ($\lambda_C = 1$) and task loss. The SAE was trained with a learning rate of $2\times10^{-3}$ for 60 epochs with $k_{\mathrm{SAE}} = 300$ latent units and a sparsity coefficient  $\lambda_{\mathrm{SAE}}=10^{-3}$. A one-hidden-layer MLP was trained with width $h = 128$ for 60 epochs with a learning rate of $10^{-3}$. All models were trained using the Adam optimiser with a batch size of 512 and evaluation was performed using 10-fold stratified cross-validation.

\textit{Evaluation.}
We computed task accuracy, Euclidean frustration $\textrm{Frust}^E(1,2)$, and Fisher frustration $\textrm{Frust}^{F_A}(1,2)$, again restricted to the shared concept subspace $(C_1, C_2)$. Paired Wilcoxon signed-rank tests were used across folds.

\subsection{Code and data availability.}
Code and data to reproduce the results of this paper are available at \url{https://github.com/CRSBanerji/concept-frustration}.

\section{Acknowledgements}

EP is funded by the Department of Science Innovation and Technology under PharosAI. CS is funded by the MANIFEST programme, core funded by the UK Government’s Office for Life Sciences and the Medical Research Council (Grant Ref: MR/Z505158/1). AI is funded by the Martingale Foundation. CRSB is supported by a King's College London AI+ Fellowship. 

\section{Supplementary Data}

Table \ref{tab:geometry_comparison},
Table \ref{tab:alpha_regime_summary_full},
Table \ref{tab:cbm1_vs_cbm2},
Table \ref{tab:cub_cbm1_vs_cbm2}.

\begin{table}[htbp]
\centering
\footnotesize
\setlength{\tabcolsep}{6pt}
\renewcommand{\arraystretch}{1.25}
\begin{tabular}{lcccc}
\toprule
Metric &
\makecell{Cylinder} &
\makecell{Sphere} &
\makecell{Hodges--Lehmann\\
median paired difference\\(sphere$-$cylinder)} &
$p$ \\
\midrule

BB acc.
& \makecell{$0.876$\\$[0.874,\,0.878]$}
& \makecell{$0.894$\\$[0.891,\,0.896]$}
& \makecell{$0.0180$\\$[0.0152,\,0.0207]$}
& $\mathbf{<10^{-4}}$ \\

CBM acc.
& \makecell{$0.811$\\$[0.759,\,0.862]$}
& \makecell{$0.824$\\$[0.782,\,0.866]$}
& \makecell{$0.0132$\\$[0.00805,\,0.0177]$}
& $\mathbf{3.9\times10^{-3}}$ \\

Concept MSE
& \makecell{$2.81\mathrm{e}{-2}$\\$[2.58,\,3.05]\mathrm{e}{-2}$}
& \makecell{$2.99\mathrm{e}{-2}$\\$[2.70,\,3.29]\mathrm{e}{-2}$}
& \makecell{$1.10\mathrm{e}{-3}$\\$[-6.34,\,3.15]\mathrm{e}{-3}$}
& $0.284$ \\

$\mathbf{\gamma^{F_A}}$
& \makecell{$0.126$\\$[0.0896,\,0.163]$}
& \makecell{$0.198$\\$[0.163,\,0.233]$}
& \makecell{$7.25\mathrm{e}{-2}$\\$[0.0213,\,0.127]$}
& $\mathbf{5.94\times10^{-3}}$ \\

$\gamma^{E}$
& \makecell{$4.85\mathrm{e}{-2}$\\$[4.30,\,5.39]\mathrm{e}{-2}$}
& \makecell{$4.47\mathrm{e}{-2}$\\$[3.88,\,5.05]\mathrm{e}{-2}$}
& \makecell{$-3.52\mathrm{e}{-3}$\\$[-8.25,\,9.15]\mathrm{e}{-4}$}
& $0.118$ \\

$G$
& \makecell{$0.927$\\$[0.922,\,0.933]$}
& \makecell{$0.926$\\$[0.919,\,0.932]$}
& \makecell{$-1.87\mathrm{e}{-3}$\\$[-6.44,\,2.99]\mathrm{e}{-3}$}
& $0.469$ \\

\bottomrule
\end{tabular}
\caption{Supplementary Table S1: Comparison of performance and geometric metrics between cylindrical (flatworld) and spherical (round-world) geometries. Values are means (top) with 95\% confidence intervals (bottom). Paired Wilcoxon signed-rank tests compare matched repetitions (sphere minus cylinder).}
\label{tab:geometry_comparison}
\end{table}
%SI table for the simulation
\begin{table}[htbp]
\centering
\footnotesize
\setlength{\tabcolsep}{6pt}
\renewcommand{\arraystretch}{1.2}

%========================================================
% Top block: regime summaries + alpha = +1 vs -1
%========================================================
\begin{tabular}{lccccc}
\toprule
Metric &
\makecell{$\alpha=0$\\Independent} &
\makecell{$\alpha=-1$\\Consistent} &
\makecell{$\alpha=1$\\Frustrated} &
\makecell{Median diff\\(+1 minus $-1$)\\(95\% CI)} &
\makecell{$p$\\Paired} \\
\midrule

BB acc.
& \makecell{$0.935$\\$[0.932,\,0.938]$}
& \makecell{$0.936$\\$[0.930,\,0.943]$}
& \makecell{$0.936$\\$[0.929,\,0.944]$}
& \makecell{$2.03\mathrm{e}{-3}$\\$[-5.02\mathrm{e}{-4},\,4.77\mathrm{e}{-3}]$}
& $0.132$ \\

CBM acc.
& \makecell{$0.783$\\$[0.769,\,0.797]$}
& \makecell{$0.718$\\$[0.703,\,0.734]$}
& \makecell{$0.699$\\$[0.684,\,0.714]$}
& \makecell{$-1.63\mathrm{e}{-2}$\\$[-2.25,\,-1.03]\mathrm{e}{-2}$}
& $\mathbf{<10^{-4}}$ \\

Concept MSE
& \makecell{$2.10$\\$[2.05,\,2.15]$}
& \makecell{$40.9$\\$[26.0,\,55.8]$}
& \makecell{$63.7$\\$[43.4,\,84.1]$}
& \makecell{$6.23$\\$[4.42,\,12.5]$}
& $\mathbf{<10^{-4}}$ \\

$\beta$
& \makecell{$26.0$\\$[23.8,\,28.3]$}
& \makecell{$258$\\$[168,\,347]$}
& \makecell{$373$\\$[264,\,483]$}
& \makecell{$39.8$\\$[15.0,\,74.0]$}
& $\mathbf{<10^{-4}}$ \\

$G$
& \makecell{$0.995$\\$[0.994,\,0.996]$}
& \makecell{$0.991$\\$[0.987,\,0.995]$}
& \makecell{$0.985$\\$[0.974,\,0.996]$}
& \makecell{$-1.76\mathrm{e}{-4}$\\$[-7.08\mathrm{e}{-4},\,3.66\mathrm{e}{-4}]$}
& $0.513$ \\

$\mathbf{\gamma^{F_A}}$
& \makecell{$1.60\mathrm{e}{-2}$\\$[1.48,\,1.71]\mathrm{e}{-2}$}
& \makecell{$2.76\mathrm{e}{-2}$\\$[2.51,\,3.02]\mathrm{e}{-2}$}
& \makecell{$3.67\mathrm{e}{-2}$\\$[3.31,\,4.04]\mathrm{e}{-2}$}
& \makecell{$7.76\mathrm{e}{-3}$\\$[5.47\mathrm{e}{-3},\,1.00\mathrm{e}{-2}]$}
& $\mathbf{<10^{-4}}$ \\

$\gamma^{E}$
& \makecell{$7.62\mathrm{e}{-3}$\\$[7.38,\,7.85]\mathrm{e}{-3}$}
& \makecell{$7.43\mathrm{e}{-3}$\\$[6.96,\,7.90]\mathrm{e}{-3}$}
& \makecell{$7.99\mathrm{e}{-3}$\\$[7.37,\,8.61]\mathrm{e}{-3}$}
& \makecell{$1.51\mathrm{e}{-4}$\\$[3.28\mathrm{e}{-5},\,3.35\mathrm{e}{-4}]$}
& $0.0108$ \\

$T_1$
& \makecell{$251$\\$[195,\,307]$}
& \makecell{$251$\\$[195,\,307]$}
& \makecell{$251$\\$[195,\,307]$}
& --
& -- \\

$T_2$
& \makecell{$0$\\$[0,\,0]$}
& \makecell{$-0.327$\\$[-0.982,\,0.327]$}
& \makecell{$0.404$\\$[-0.448,\,1.26]$}
& \makecell{$0.304$\\$[-1.34,\,2.07]$}
& $0.742$ \\

$T_3$
& \makecell{$0$\\$[0,\,0]$}
& \makecell{$2.76\mathrm{e}{6}$\\$[-6.27\mathrm{e}{5},\,6.15\mathrm{e}{6}]$}
& \makecell{$3.27\mathrm{e}{5}$\\$[-6.18\mathrm{e}{4},\,7.15\mathrm{e}{5}]$}
& \makecell{$615$\\$[237,\,1.37\mathrm{e}{3}]$}
& $\mathbf{<10^{-4}}$ \\

$T_4$
& \makecell{$326$\\$[255,\,398]$}
& \makecell{$326$\\$[255,\,398]$}
& \makecell{$326$\\$[255,\,398]$}
& --
& -- \\

\bottomrule
\end{tabular}

\vspace{0.6cm}

%========================================================
% Bottom block: comparisons against alpha = 0
%========================================================
\begin{tabular}{lcccc}
\toprule
Metric &
\makecell{Median diff\\(+1 minus 0)\\(95\% CI)} &
\makecell{$p$\\0 vs +1} &
\makecell{Median diff\\($-1$ minus 0)\\(95\% CI)} &
\makecell{$p$\\0 vs $-1$} \\
\midrule

BB acc.
& \makecell{$5.24\mathrm{e}{-3}$\\$[7.40\mathrm{e}{-4},\,1.02\mathrm{e}{-2}]$}
& $0.0232$
& \makecell{$2.70\mathrm{e}{-3}$\\$[1.60\mathrm{e}{-5},\,5.98\mathrm{e}{-3}]$}
& $0.050$ \\

CBM acc.
& \makecell{$-7.72\mathrm{e}{-2}$\\$[-8.83\mathrm{e}{-2},\,-6.66\mathrm{e}{-2}]$}
& $\mathbf{<10^{-4}}$
& \makecell{$-5.83\mathrm{e}{-2}$\\$[-6.72\mathrm{e}{-2},\,-4.92\mathrm{e}{-2}]$}
& $\mathbf{<10^{-4}}$ \\

Concept MSE
& \makecell{$20.2$\\$[11.8,\,25.7]$}
& $\mathbf{<10^{-4}}$
& \makecell{$5.75$\\$[4.23,\,14.3]$}
& $\mathbf{<10^{-4}}$ \\

$\beta$
& \makecell{$136$\\$[69.9,\,176]$}
& $\mathbf{<10^{-4}}$
& \makecell{$44.3$\\$[28.6,\,79.7]$}
& $\mathbf{<10^{-4}}$ \\

$G$
& \makecell{$2.45\mathrm{e}{-4}$\\$[-4.28\mathrm{e}{-4},\,9.34\mathrm{e}{-4}]$}
& $0.443$
& \makecell{$3.68\mathrm{e}{-4}$\\$[-1.82\mathrm{e}{-4},\,8.97\mathrm{e}{-4}]$}
& $0.172$ \\

$\gamma^{F_A}$
& \makecell{$1.70\mathrm{e}{-2}$\\$[1.44\mathrm{e}{-2},\,1.99\mathrm{e}{-2}]$}
& $\mathbf{<10^{-4}}$
& \makecell{$9.29\mathrm{e}{-3}$\\$[7.30\mathrm{e}{-3},\,1.14\mathrm{e}{-2}]$}
& $\mathbf{<10^{-4}}$ \\

$\gamma^{E}$
& \makecell{$-9.00\mathrm{e}{-4}$\\$[-1.11\mathrm{e}{-3},\,-5.96\mathrm{e}{-4}]$}
& $\mathbf{<10^{-4}}$
& \makecell{$-9.59\mathrm{e}{-4}$\\$[-1.14\mathrm{e}{-3},\,-7.72\mathrm{e}{-4}]$}
& $\mathbf{<10^{-4}}$ \\

$T_1$
& --
& --
& --
& -- \\

$T_2$
& \makecell{$0.934$\\$[-0.211,\,2.02]$}
& $0.112$
& \makecell{$-0.277$\\$[-1.26,\,0.95]$}
& $0.601$ \\

$T_3$
& \makecell{$2.06\mathrm{e}{3}$\\$[1.15\mathrm{e}{3},\,3.47\mathrm{e}{3}]$}
& $\mathbf{<10^{-4}}$
& \makecell{$639$\\$[386,\,953]$}
& $\mathbf{<10^{-4}}$ \\

$T_4$
& --
& --
& --
& -- \\

\bottomrule
\end{tabular}

\caption{Supplementary Table S2: Summary statistics across concept regimes defined by $\alpha$. The upper block reports regime means with 95\% confidence intervals together with the paired comparison between the frustrated ($\alpha=1$) and consistent ($\alpha=-1$) settings. The lower block reports paired Hodges--Lehmann median shift estimates with 95\% confidence intervals and paired Wilcoxon signed-rank $p$-values for comparisons against the independent baseline ($\alpha=0$).}
\label{tab:alpha_regime_summary_full}
\end{table}

\begin{table}[htbp]
\centering
\footnotesize
\setlength{\tabcolsep}{6pt}
\renewcommand{\arraystretch}{1.25}
\begin{tabular}{lcccc}
\toprule
Metric &
\makecell{CBM1\\Mean (95\% CI)} &
\makecell{CBM2\\Mean (95\% CI)} &
\makecell{Median diff\\(CBM1 $-$ CBM2)} &
$p$ \\
\midrule

CBM acc.
& \makecell{$0.728$\\$[0.657,\,0.798]$}
& \makecell{$0.805$\\$[0.777,\,0.834]$}
& \makecell{$-7.20\mathrm{e}{-2}$\\$[-1.77\mathrm{e}{-1},\,1.04\mathrm{e}{-3}]$}
& $0.058$ \\

$\gamma^{E}$
& \makecell{$1.01\mathrm{e}{-3}$\\$[6.59\mathrm{e}{-4},\,1.36\mathrm{e}{-3}]$}
& \makecell{$1.31\mathrm{e}{-3}$\\$[8.00\mathrm{e}{-4},\,1.82\mathrm{e}{-3}]$}
& \makecell{$-2.75\mathrm{e}{-4}$\\$[-7.82\mathrm{e}{-4},\,1.31\mathrm{e}{-4}]$}
& $0.492$ \\

$\gamma^{F_A}$
& \makecell{$4.47\mathrm{e}{-2}$\\$[-1.71\mathrm{e}{-2},\,1.06\mathrm{e}{-1}]$}
& \makecell{$5.30\mathrm{e}{-3}$\\$[-1.83\mathrm{e}{-3},\,1.24\mathrm{e}{-2}]$}
& \makecell{$3.13\mathrm{e}{-3}$\\$[3.21\mathrm{e}{-4},\,1.12\mathrm{e}{-1}]$}
& $\mathbf{0.0488}$ \\

$G$
& \makecell{$0.989$\\$[0.987,\,0.991]$}
& \makecell{$0.987$\\$[0.987,\,0.988]$}
& \makecell{$1.50\mathrm{e}{-3}$\\$[-4.46\mathrm{e}{-4},\,3.55\mathrm{e}{-3}]$}
& $0.131$ \\

% $\beta$
% & \makecell{$2.16$\\$[2.04,\,2.28]$}
% & \makecell{$2.17$\\$[2.05,\,2.29]$}
% & \makecell{$-1.41\mathrm{e}{-2}$\\$[-6.05\mathrm{e}{-2},\,3.94\mathrm{e}{-2}]$}
% & $0.625$ \\

\bottomrule
\end{tabular}
\caption{Supplementray Table S3: Comparison of CBM1 and CBM2 performance metrics in the language task. Values are mean (top) with 95\% confidence intervals (bottom). Median differences are Hodges--Lehmann estimates from paired Wilcoxon signed-rank tests (CBM1 $-$ CBM2).}
\label{tab:cbm1_vs_cbm2}
\end{table}

\begin{table}[htbp]
\centering
\footnotesize
\setlength{\tabcolsep}{6pt}
\renewcommand{\arraystretch}{1.25}
\begin{tabular}{lcccc}
\toprule
Metric &
\makecell{CBM1\\Mean (95\% CI)} &
\makecell{CBM2\\Mean (95\% CI)} &
\makecell{Median diff\\(CBM1 $-$ CBM2)} &
$p$ \\
\midrule

CBM acc.
& \makecell{$0.639$\\$[0.547,\,0.731]$}
& \makecell{$0.755$\\$[0.680,\,0.829]$}
& \makecell{$-1.27\mathrm{e}{-1}$\\$[-2.62\mathrm{e}{-1},\,2.25\mathrm{e}{-2}]$}
& $0.0972$ \\

$\gamma^{E}$
& \makecell{$7.22\mathrm{e}{-3}$\\$[6.48\mathrm{e}{-3},\,7.95\mathrm{e}{-3}]$}
& \makecell{$6.98\mathrm{e}{-3}$\\$[5.87\mathrm{e}{-3},\,8.09\mathrm{e}{-3}]$}
& \makecell{$4.29\mathrm{e}{-4}$\\$[-9.77\mathrm{e}{-4},\,1.46\mathrm{e}{-3}]$}
& $0.557$ \\

$\gamma^{F_A}$
& \makecell{$6.79\mathrm{e}{-2}$\\$[5.67\mathrm{e}{-3},\,1.30\mathrm{e}{-1}]$}
& \makecell{$6.49\mathrm{e}{-3}$\\$[-3.83\mathrm{e}{-3},\,1.68\mathrm{e}{-2}]$}
& \makecell{$5.71\mathrm{e}{-2}$\\$[4.65\mathrm{e}{-5},\,1.22\mathrm{e}{-1}]$}
& $\mathbf{0.0488}$ \\

$G$
& \makecell{$0.995$\\$[0.993,\,0.997]$}
& \makecell{$0.993$\\$[0.991,\,0.995]$}
& \makecell{$1.69\mathrm{e}{-3}$\\$[-1.60\mathrm{e}{-3},\,5.22\mathrm{e}{-3}]$}
& $0.232$ \\

\bottomrule
\end{tabular}
\caption{Supplementrary Table S4: Comparison of CBM1 and CBM2 on the CUB gull--tern classification task using CLIP ViT-B/16 embeddings. Values are mean (top) with 95\% confidence intervals (bottom). Median differences are Hodges--Lehmann estimates from paired Wilcoxon signed-rank tests (CBM1 $-$ CBM2).}
\label{tab:cub_cbm1_vs_cbm2}
\end{table}

\bibliography{refs}

\begin{thebibliography}{10}

\bibitem{rudin2019stop}
Cynthia Rudin.
\newblock Stop explaining black box machine learning models for high stakes
  decisions and use interpretable models instead.
\newblock {\em Nature machine intelligence}, 1(5):206--215, 2019.

\bibitem{doshi2017towards}
Finale Doshi-Velez and Been Kim.
\newblock Towards a rigorous science of interpretable machine learning.
\newblock {\em arXiv preprint arXiv:1702.08608}, 2017.

\bibitem{hassija2024interpreting}
Vikas Hassija, Vinay Chamola, Atmesh Mahapatra, Abhinandan Singal, Divyansh
  Goel, Kaizhu Huang, Simone Scardapane, Indro Spinelli, Mufti Mahmud, and Amir
  Hussain.
\newblock Interpreting black-box models: a review on explainable artificial
  intelligence.
\newblock {\em Cognitive Computation}, 16(1):45--74, 2024.

\bibitem{EUMemberStates2024}
EU~Member States.
\newblock The {EU} {AI} {A}ct, 2026.

\bibitem{Binns2021}
Reuben Binns and Michael Veale.
\newblock Is that your final decision? {M}ulti-stage profiling, selective
  effects, and {A}rticle 22 of the {GDPR}.
\newblock {\em International Data Privacy Law}, 11:319--332, 12 2021.

\bibitem{lundberg_unified_2017}
Scott Lundberg and Su-In Lee.
\newblock A {Unified} {Approach} to {Interpreting} {Model} {Predictions},
  November 2017.
\newblock arXiv:1705.07874 [cs].

\bibitem{ribeiro_why_2016}
Marco~Tulio Ribeiro, Sameer Singh, and Carlos Guestrin.
\newblock "{Why} {Should} {I} {Trust} {You}?": {Explaining} the {Predictions}
  of {Any} {Classifier}, August 2016.
\newblock arXiv:1602.04938 [cs].

\bibitem{sundararajan_axiomatic_2017}
Mukund Sundararajan, Ankur Taly, and Qiqi Yan.
\newblock Axiomatic {Attribution} for {Deep} {Networks}, June 2017.
\newblock arXiv:1703.01365 [cs].

\bibitem{Saranya2023}
A.~Saranya and R.~Subhashini.
\newblock A systematic review of explainable artificial intelligence models and
  applications: Recent developments and future trends.
\newblock {\em Decision Analytics Journal}, 7:100230, 6 2023.

\bibitem{Cunningham2023}
Hoagy Cunningham, Aidan Ewart, Logan Riggs, Robert Huben, and Lee Sharkey.
\newblock Sparse autoencoders find highly interpretable features in language
  models.
\newblock {\em 12th International Conference on Learning Representations, ICLR
  2024}, 10 2023.

\bibitem{Elhage2022}
Nelson Elhage, Tristan Hume, Catherine Olsson, Nicholas Schiefer, Tom Henighan,
  Shauna Kravec, Zac Hatfield-Dodds, Robert Lasenby, Dawn Drain, Carol Chen,
  Roger Grosse, Sam McCandlish, Jared Kaplan, Dario Amodei, Martin Wattenberg,
  and Christopher Olah.
\newblock Toy models of superposition.
\newblock 9 2022.

\bibitem{bau2017network}
David Bau, Bolei Zhou, Aditya Khosla, Aude Oliva, and Antonio Torralba.
\newblock Network dissection: Quantifying interpretability of deep visual
  representations.
\newblock In {\em Proceedings of the IEEE conference on computer vision and
  pattern recognition}, pages 6541--6549, 2017.

\bibitem{Koh2020}
Pang~Wei Koh, Thao Nguye, Yew~Siang Tang, Stephen Mussmann, Emma Pierso, Been
  Kim, and Percy Liang.
\newblock Concept bottleneck models.
\newblock {\em 37th International Conference on Machine Learning, ICML 2020},
  PartF168147-7:5294--5304, 7 2020.

\bibitem{Zarlenga2022}
Mateo~Espinosa Zarlenga, Pietro Barbiero, Gabriele Ciravegna, Giuseppe Marra,
  Francesco Giannini, Michelangelo Diligenti, Zohreh Shams, Frederic Precioso,
  Stefano Melacci, Adrian Weller, Pietro Lio, and Mateja Jamnik.
\newblock Concept embedding models: Beyond the accuracy-explainability
  trade-off.
\newblock {\em Advances in Neural Information Processing Systems}, 35, 9 2022.

\bibitem{Debot2025}
David Debot and Giuseppe Marra.
\newblock Quantifying the accuracy-interpretability trade-off in concept-based
  sidechannel models.
\newblock 10 2025.

\bibitem{Mahinpei2021PromisesAP}
Anita Mahinpei, Justin Clark, Isaac Lage, Finale Doshi-Velez, and Weiwei Pan.
\newblock Promises and pitfalls of black-box concept learning models.
\newblock {\em CoRR}, abs/2106.13314, 2021.

\bibitem{Havasi2022}
Marton Havasi, Sonali Parbhoo, and Finale Doshi-Velez.
\newblock Addressing leakage in concept bottleneck models, 10 2022.

\bibitem{Sawada2022ConceptBM}
Yoshihide Sawada and Keigo Nakamura.
\newblock Concept bottleneck model with additional unsupervised concepts.
\newblock {\em IEEE Access}, 10:41758--41765, 2022.

\bibitem{yuksekgonul2023posthoc}
Mert Yuksekgonul, Maggie Wang, and James Zou.
\newblock Post-hoc concept bottleneck models.
\newblock In {\em The Eleventh International Conference on Learning
  Representations}, 2023.

\bibitem{ismail2024concept}
Aya~Abdelsalam Ismail, Julius Adebayo, Hector~Corrada Bravo, Stephen Ra, and
  Kyunghyun Cho.
\newblock Concept bottleneck generative models.
\newblock In {\em The Twelfth International Conference on Learning
  Representations}, 2024.

\bibitem{Rocchi--Henry2025}
Alexandre Rocchi-Henry, Thomas Fel, and Gianni Franchi.
\newblock A geometric unification of concept learning with concept cones.
\newblock 12 2025.

\bibitem{SpinGlassTheoryAndBeyond}
M~Mezard, G~Parisi, and M~Virasoro.
\newblock {\em Spin Glass Theory and Beyond}.
\newblock WORLD SCIENTIFIC, 1986.

\bibitem{misra2023Sarcasm}
Rishabh Misra and Prahal Arora.
\newblock Sarcasm detection using news headlines dataset.
\newblock {\em AI Open}, 4:13--18, 2023.

\bibitem{he2021debertav3}
Pengcheng He, Jianfeng Gao, and Weizhu Chen.
\newblock Debertav3: Improving deberta using electra-style pre-training with
  gradient-disentangled embedding sharing, 2021.

\bibitem{barbieri-etal-2020-tweeteval}
Francesco Barbieri, Jose Camacho-Collados, Luis Espinosa~Anke, and Leonardo
  Neves.
\newblock {T}weet{E}val: Unified benchmark and comparative evaluation for tweet
  classification.
\newblock In {\em Findings of the Association for Computational Linguistics:
  EMNLP 2020}, pages 1644--1650, Online, November 2020. Association for
  Computational Linguistics.

\bibitem{wah2011caltech}
Catherine Wah, Steve Branson, Peter Welinder, Pietro Perona, and Serge
  Belongie.
\newblock The caltech-ucsd birds-200-2011 dataset.
\newblock 2011.

\bibitem{Radford2021}
Alec Radford, Jong~Wook Kim, Chris Hallacy, Aditya Ramesh, Gabriel Goh,
  Sandhini Agarwal, Girish Sastry, Amanda Askell, Pamela Mishkin, Jack Clark,
  Gretchen Krueger, and Ilya Sutskever.
\newblock Learning transferable visual models from natural language
  supervision.
\newblock {\em Proceedings of Machine Learning Research}, 139:8748--8763, 2
  2021.

\bibitem{Gershman2012}
Samuel~J. Gershman and Yael Niv.
\newblock Exploring a latent cause theory of classical conditioning.
\newblock {\em Learning \& Behavior 2012 40:3}, 40:255--268, 8 2012.

\bibitem{Schneider2012}
Michael Schneider, Xenia Vamvakoussi, and Wim~Van Dooren.
\newblock Conceptual change.
\newblock {\em Encyclopedia of the Sciences of Learning}, pages 735--738, 1
  2012.

\bibitem{Banerji2025}
Christopher R.~S. Banerji, Tapabrata Chakraborti, Aya~Abdelsalam Ismail,
  Florian Ostmann, and Ben~D. MacArthur.
\newblock Train clinical {AI} to reason like a team of doctors.
\newblock {\em Nature 2025 639:8053}, 639:32--34, 3 2025.

\bibitem{Marconato23}
Emanuele Marconato, Andrea Passerini, and Stefano Teso.
\newblock Interpretability is in the mind of the beholder: A causal framework
  for human-interpretable representation learning.
\newblock {\em Entropy}, 25(12), 2023.

\bibitem{higgins2017betavae}
Irina Higgins, Loic Matthey, Arka Pal, Christopher Burgess, Xavier Glorot,
  Matthew Botvinick, Shakir Mohamed, and Alexander Lerchner.
\newblock beta-{VAE}: Learning basic visual concepts with a constrained
  variational framework.
\newblock In {\em International Conference on Learning Representations}, 2017.

\bibitem{Chen2020}
Zhi Chen, Yijie Bei, and Cynthia Rudin.
\newblock Concept whitening for interpretable image recognition.
\newblock {\em Nature Machine Intelligence}, 2(12):772--782, December 2020.

\bibitem{MENG2026108346}
Xiyu Meng, Yilong Lin, Yuhan Wu, and Lu~Ying.
\newblock Enhancing concept alignment with explanatory interactive disentangled
  representation learning.
\newblock {\em Neural Networks}, 196:108346, 2026.

\bibitem{kuhn2012structure}
Thomas~S. Kuhn.
\newblock {\em The Structure of Scientific Revolutions}.
\newblock University of Chicago Press, 50th edition, 2012.

\bibitem{choi2024adaptive}
Jihye Choi, Jayaram Raghuram, Yixuan Li, Suman Banerjee, and Somesh Jha.
\newblock Adaptive concept bottleneck for foundation models.
\newblock In {\em ICML 2024 Workshop on Foundation Models in the Wild}, 2024.

\bibitem{bhalla2024interpreting}
Usha Bhalla, Alex Oesterling, Suraj Srinivas, Flavio Calmon, and Himabindu
  Lakkaraju.
\newblock Interpreting {CLIP} with sparse linear concept embeddings (spli{CE}).
\newblock In {\em The Thirty-eighth Annual Conference on Neural Information
  Processing Systems}, 2024.

\bibitem{linearrepresentationhyp}
Kiho Park, Yo~Joong Choe, and Victor Veitch.
\newblock The linear representation hypothesis and the geometry of large
  language models.
\newblock In Ruslan Salakhutdinov, Zico Kolter, Katherine Heller, Adrian
  Weller, Nuria Oliver, Jonathan Scarlett, and Felix Berkenkamp, editors, {\em
  Proceedings of the 41st International Conference on Machine Learning}, volume
  235 of {\em Proceedings of Machine Learning Research}, pages 39643--39666.
  PMLR, 21--27 Jul 2024.

\bibitem{kingma2017adammethodstochasticoptimization}
Diederik~P. Kingma and Jimmy Ba.
\newblock Adam: A method for stochastic optimization, 2017.

\end{thebibliography}
\bibliographystyle{unsrt}

\end{document}